\documentclass[11pt]{article}

\usepackage[preprint]{acl}
\usepackage{times}
\usepackage{latexsym}

\usepackage{amsmath}   
\usepackage{amssymb}   

\usepackage[most]{tcolorbox}
\usepackage{amsmath}  
\usepackage{amssymb}  
\usepackage{amsfonts} 
\usepackage{mathtools} 
\usepackage{bm}       

\usepackage{amsthm}   

\usepackage{booktabs} 
\usepackage{graphicx} 
\usepackage{url}      
\usepackage{hyperref} 

\theoremstyle{plain}
\newtheorem{theorem}{Theorem}[section] 
\newtheorem{lemma}[theorem]{Lemma}
\newtheorem{proposition}[theorem]{Proposition}

\theoremstyle{definition}
\newtheorem{definition}{Definition}[section]

\theoremstyle{remark}

\usepackage{tabularx}

\usepackage{booktabs}
\usepackage{multirow}
\usepackage{graphicx}
\usepackage[table]{xcolor}

\definecolor{bestblue}{HTML}{9BC2E6}   
\definecolor{secondblue}{HTML}{DDEBF7} 

\usepackage{enumitem}

\usepackage{algorithm}
\usepackage{algorithmic}
\usepackage{amsmath}
\usepackage{amssymb}
\usepackage{bm} 

\usepackage[T1]{fontenc}

\usepackage[utf8]{inputenc}

\usepackage{microtype}

\usepackage{inconsolata}

\usepackage{graphicx}

\title{RCBSF: A Multi-Agent Framework for Automated Contract Revision via Stackelberg Game}

\author{
    \textbf{Shijia Xu}$^{\spadesuit}$,
    \textbf{Yu Wang}$^{\spadesuit}$,
    \textbf{Xiaolong Jia}$^{\heartsuit}$,
    \textbf{Zhou Wu}$^{\spadesuit, }$\thanks{Corresponding author},
    \textbf{Kai Liu}$^{\spadesuit, \clubsuit}$,
    \textbf{April Xiaowen Dong}$^{\diamondsuit}$
          \\
    $^{\spadesuit}$Chongqing University, China,
    $^{\heartsuit}$Queen Mary University of London, UK \\
    $^{\clubsuit}$Chongqing Key Laboratory of Big Data Intelligence and Privacy Computing, China \\
    $^{\diamondsuit}$Fangda Partners, China
         \\
    \texttt{\{shijiaxu, ysy\_wang\}@stu.cqu.edu.cn} \\
    \texttt{\{zhouwu, liukai0807\}@cqu.edu.cn,} 
    \texttt{x.jia@qmul.ac.uk}
}

\begin{document}
\maketitle

\begin{abstract}
Despite the widespread adoption of Large Language Models (LLMs) in Legal AI, their utility for automated contract revision remains impeded by hallucinated safety and a lack of rigorous behavioral constraints. To address these limitations, we propose the Risk-Constrained Bilevel Stackelberg Framework (RCBSF), which formulates revision as a non-cooperative Stackelberg game. RCBSF establishes a hierarchical Leader Follower structure where a Global Prescriptive Agent (GPA) imposes risk budgets upon a follower system constituted by a Constrained Revision Agent (CRA) and a Local Verification Agent (LVA) to iteratively optimize output. We provide theoretical guarantees that this bilevel formulation converges to an equilibrium yielding strictly superior utility over unguided configurations. Empirical validation on a unified benchmark demonstrates that RCBSF achieves state-of-the-art performance, surpassing iterative baselines with an average Risk Resolution Rate (RRR) of 84.21\% while enhancing token efficiency. Our code is available at \url{https://github.com/xjiacs/RCBSF}.
\end{abstract}


\section{Introduction}

Legal contracts serve as the cornerstone of modern commercial society and governance structures, establishing a framework of enforceable rights and obligations \citep{yue2023legalai}. However, the drafting, revision, and review of contracts constitute a highly professional and time-consuming task. The ambiguity of textual expression and the implicit conflicts between clauses pose significant challenges to this process. Traditional methods over-rely on expert judgment, resulting in dual dilemmas of low efficiency and unstable quality in contract review \citep{aires2019normconflict}. According to a study by World Commerce \& Contracting, the average value loss caused by improper contract handling can reach 9.2\% of an enterprise's annual revenue \citep{worldcc2022report}. Therefore, advancing the automation and intelligentization of contract review and revision is of critical importance for enhancing operational efficiency and mitigating legal risks.

\begin{figure}[t!]  
    \centering
    \includegraphics[width=1\linewidth]{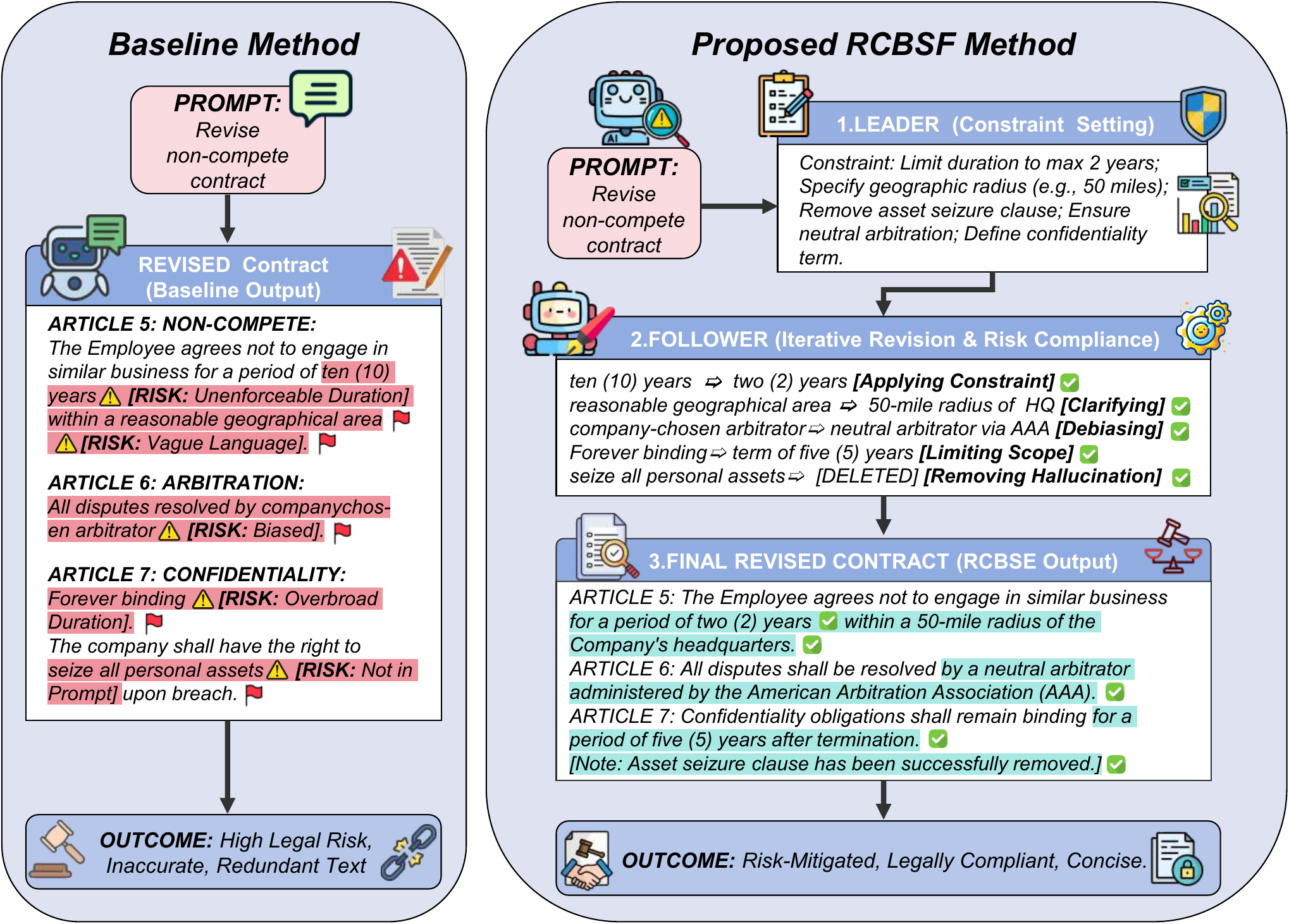}  
    
    \caption{Comparison of legal contract generation workflows between the Baseline (Standard LLM) and the Risk-Constrained Bilevel Stackelberg Framework (RCBSF). }
    \label{fig:framework}
\end{figure}

In recent years, Large Language Models (LLMs) have demonstrated unprecedented capabilities across tasks including language understanding, generation, retrieval-augmented processing, and summarization \citep{achiam2023gpt4, touvron2023llama2, team2023gemini}. This has driven a new leap forward in Legal Artificial Intelligence (LegalAI) \citep{zhong2020does}. Empirical studies in the legal field show that general-purpose LLMs have the potential to match or even surpass human baselines in specific subtasks (e.g., legal question answering and provision retrieval).This is shown by their performance in passing legal examinations \citep{katz2024gpt, nay2024taxattorneys}. Nevertheless, the direct deployment of a single general-purpose LLM in high-risk, adversarial contract scenarios remains confronted with core challenges. First, the legal field cannot tolerate issues such as hallucinations and false citations that affect credibility \citep{ji2023hallucination}. Second, generic models often lack robustness in legal privacy preservation and rigorous logical consistency. These structural impediments underscore the urgent need for a robust paradigm that inherently harmonizes high-quality contract generation with risk-constrained auditing.

Multi-Agent Systems (MAS) provide a powerful paradigm to break through the structural bottlenecks of single models. Via role division and interactive collaboration, MAS can approximate the workflows of real legal teams and demonstrate considerable application value \citep{wu2023autogen, hong2023metagpt, li2023camel, shinn2023reflexion, du2023mad}. Complementing this, recent benchmarks (e.g., LegalBench, LawBench) have advanced the standardized evaluation of legal scenarios \citep{guha2024legalbench, fei2024lawbench}. However, most existing MAS frameworks focus on cooperative information aggregation or sequential chains. They fail to leverage hierarchical adversarial interactions to drive text robustness. This deficiency diverges from the rigorous revisor-auditor hierarchy in real-world legal practice. In such settings, a senior partner (auditor) must impose strict, binding constraints on the associate's (revisor) output.

To bridge this gap, this paper proposes the Risk-Constrained Bilevel Stackelberg Framework (RCBSF). A game-theoretic multi-agent framework for automated contract revision is shown in Figure~\ref{fig:framework}.
Unlike the cooperative approach, the revision process is modeled as a Bilevel optimization problem governed by a Stackelberg game. The framework employs a hierarchical architecture with three specialized agents organized into two strategic levels.

The Global Prescriptive Agent (GPA) acts as the Leader, identifying risks under a strict 5-dimensional taxonomy (Category, Location, Evidence, Issue, Suggestion) to optimize a risk-budget objective. 
The Follower system comprises a Constrained Revision Agent (CRA) and a Local Verification Agent (LVA). Revisions are executed iteratively and rigorously conform to the constraints prescribed by the Leader.

A Stackelberg equilibrium is enforced in our framework, effectively resolving the issue of suboptimal local solutions inherent in standard Nash equilibria. The Leader moves first by committing to a specific risk instruction vector (derived from weighted Q-scores), and the Follower optimizes its generation within this induced strategy space. This hierarchical structure allows for precise control over the revision process, ensuring that high-priority risks are resolved using specific evidence extracted from the text. We utilize a unified benchmark constructed from authoritative datasets MAUD\citep{roit2023maud}, CUAD\citep{atticus2021cuad}, ContractNLI\citep{koreeda2021contractnli}, and PrivacyQA\citep{ravichander2019privacyqa} to ensure strong alignment between agent capabilities and diverse legal scenarios. The contributions of this paper are summarized as follows:
\begin{itemize}
    \item We propose \textbf{RCBSF}, a game-theoretic framework employing a hierarchical GPA-CRA-LVA architecture. This framework imposes strict 5-dimensional risk constraints to drive contract revision toward a risk-minimized Stackelberg equilibrium.
    \item We theoretically prove that this bilevel optimization formulation yields strictly superior utility compared to standard prompting.
    \item We construct a unified legal benchmark derived from four authoritative datasets and achieve state-of-the-art performance, significantly outperforming baselines in Risk Resolution Rate (RRR) and Token Efficiency Score (TES).
\end{itemize}

\section{Related Work}
\subsection{Large Language Models in the Legal Domain}

The application of Large Language Models (LLMs) to the legal domain has become a focal point of AI research \citep{Laskar2023}. Early efforts concentrated on using pre-trained models for specific legal NLP tasks such as information extraction, case classification, and judgment prediction \citep{zhong2020does}. With the enhanced capabilities of LLMs, researchers have started to develop models specifically for the legal sector, known as Legal LLMs. These models are typically optimized via two primary pathways. The first involves continuous pre-training on vast legal text corpora to deepen their understanding of legal terminology and context \citep{Huang2023_LawyerLlama}. The second employs fine-tuning on high-quality legal instruction datasets to align them with specific legal reasoning patterns or task formats \citep{Yue2023_DISCLawLLM}.

Despite significant progress, Legal LLMs still face fundamental challenges. To mitigate issues like hallucination and outdated knowledge, Retrieval-Augmented Generation (RAG) has been widely adopted \citep{Gao2024_RAGSurvey}. Related information from an external knowledge base is retrieved by RAG to significantly improve the factual accuracy of the generated content \citep{Wang2024_SpeculativeRAG}. Advanced RAG frameworks like Self-RAG \citep{Asai2023_SelfRAG} and Corrective RAG \citep{Gu2024_CorrectiveRAG} further enhance retrieval and generation quality through self-reflection and correction mechanisms. Nonetheless, the single-agent paradigm is still applied in most current legal LLM and follows a linear retrieve and generate workflow. While effective for factual question answering, this model is ill-suited for complex generative tasks like contract revision, which require iterative deliberation and adversarial review. Our work aims to transcend this unidirectional model by introducing a dynamic, iterative quality improvement mechanism through multi-agent gameplay.

\subsection{Multi-Agent Systems and Game-Driven Robust Generation}

Through role division and interactive collaboration, multi-agent frameworks have demonstrated advantages in complex tasks such as software engineering, planning, and decision-making \citep{wu2023autogen, hong2023metagpt, li2023camel, shinn2023reflexion, du2023mad}. Mechanisms of self-correction, CAMEL's two-agent dialogue segmentation, and multi-agent debate to improve factuality have all validated the basic proposition of quality enhancement through interaction from different perspectives \citep{shinn2023reflexion, li2023camel, du2023mad}. In contrast, the high-risk and adversarial nature of legal contracts renders purely collaborative mechanisms insufficient. Consequently, this domain requires the introduction of explicit game structures that offer provable convergence and robustness guarantees.

Recent advancements in no-regret learning and robust Stackelberg games provide solid foundations for handling strategic uncertainty \citep{anagnostides2022swap, farina2022jACM, hsieh2022noisy, anant2022drobuststackelberg, cesabianchi2023newsvendor}. Building on this, we operationalize the generation-audit interaction as a non-cooperative bilevel game. Consequently, the Follower system, comprising the Constrained Revision Agent (CRA) and Local Verification Agent (LVA), is strictly constrained by the Global Prescriptive Agent (GPA) to converge to a Risk-Constrained Bilevel Stackelberg Equilibrium (RCBSF). This mechanism transcends simple heuristics by dynamically minimizing residual risk through iterative repair, effectively approximating the guarantees of Distributionally Robust Optimization (DRO) to ensure contract optimality against adversarial challenges.

\begin{figure*}[t]
    \centering 
    \includegraphics[width=0.9\textwidth]{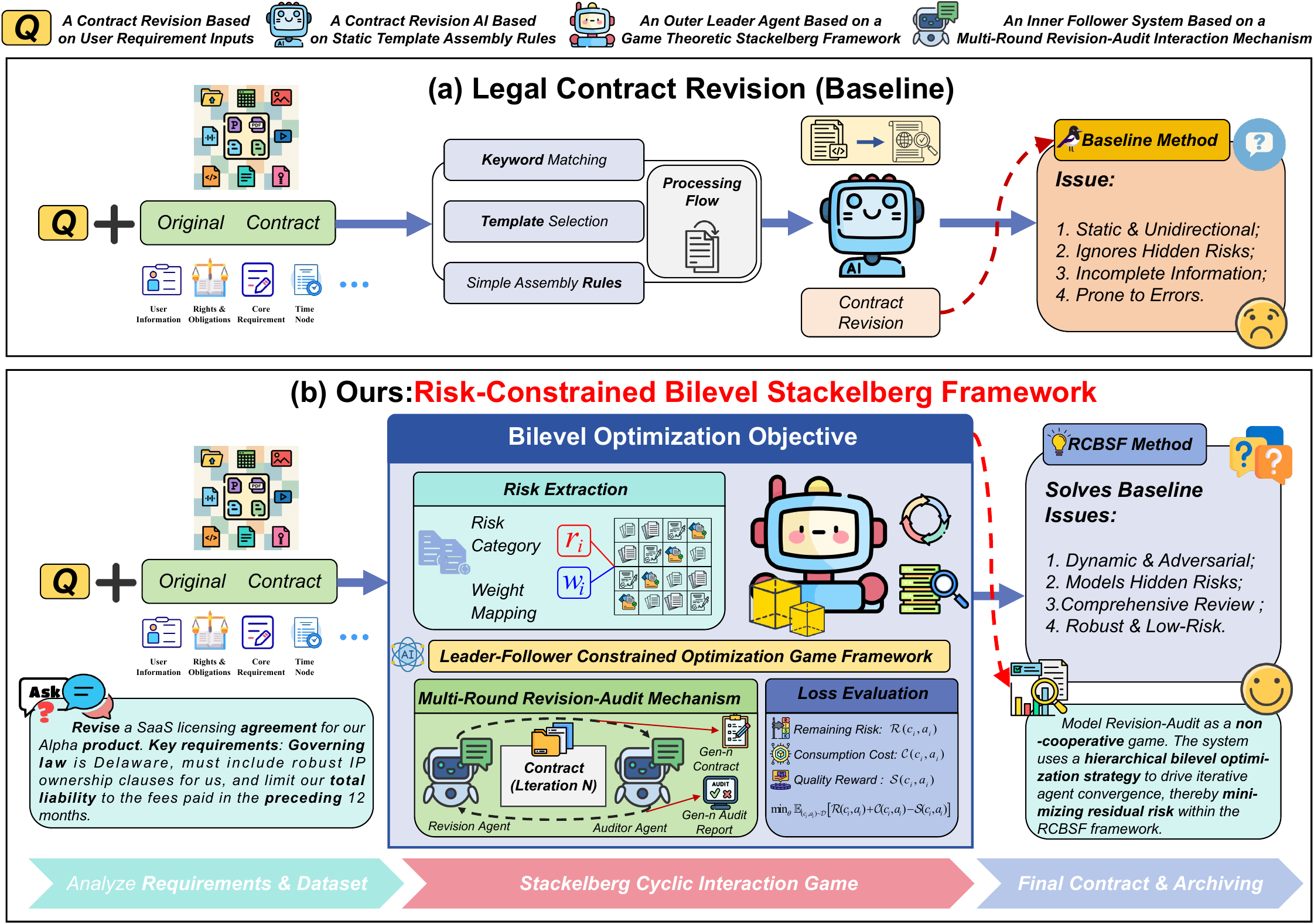}
    
    \caption{Illustration of the comparison between traditional contract revision paradigms and our proposed framework. (a) Baseline Methods rely on static template assembly or simple unidirectional generation based on user requirements. These approaches are typically static and prone to errors, often ignoring hidden risks due to the lack of iterative feedback mechanisms; (b) In contrast, our proposed Risk-Constrained Bilevel Stackelberg Framework (RCBSF) models the revision process as a hierarchical bilevel optimization game. It employs a Global Prescriptive Agent (GPA) as the leader to construct strict risk-budget objectives, driving a follower system consisting of a Revision Agent and an Auditor Agent. Through a Multi-Round Revision-Audit Interaction, these agents iteratively converge towards a solution that dynamically minimizes residual risk and consumption costs while preserving semantic quality.}
    \label{fig:RCBSF} 
\end{figure*}

\section{Methodology: A Stackelberg Game Theoretic Framework for Contract Revision}
\label{sec:methodology}

We formulate the automated contract revision problem as a hierarchical \textbf{Risk-Constrained Bilevel Stackelberg Framework (RCBSF)}, as illustrated in Figure~\ref{fig:RCBSF}. Unlike conventional sequence-to-sequence paradigms which suffer from hallucination and constraint amnesia, we model the interaction between risk prescriptive logic and generative revision as a non-cooperative game $\mathcal{G} = \langle \mathcal{N}, \mathcal{S}, \mathcal{A}, \mathcal{J} \rangle$ on a semantic Riemannian manifold.

\subsection{Stackelberg Game Formulation}

Let $\mathcal{N} = \{L, F\}$ denote the set of agents, where $L$ is the \textit{Global Prescriptive Agent} (Leader, acting as the Outer Auditor) and $F$ is the \textit{Revision System} (Follower, comprising the Drafter and Local Verifier). The state space $\mathcal{S} \subseteq \mathbb{R}^d$ represents the continuous embedding of the contract text.

\paragraph{The 5-Dimensional Risk Hyperplane.}
The Leader observes the contract state $\mathbf{x} \in \mathcal{S}$ and projects it onto a structured risk manifold via the extraction function $\Psi: \mathcal{S} \to \mathcal{H}$. A specific instruction vector $\mathbf{h} \in \mathcal{H}$ is rigorously constructed as a tuple of 5-dimensional feature tensors:
\begin{equation}
    \mathbf{h} = \bigcup_{k=1}^{K} \left\{ \mathbf{v}_k \mid \mathbf{v}_k = \phi(c_k, l_k, e_k, i_k, s_k) \right\}
\end{equation}
where $c_k, l_k, e_k, i_k, s_k$ correspond to \textit{Category}, \textit{Location}, \textit{Evidence}, \textit{Issue}, and \textit{Suggestion} respectively. Here, $\phi(\cdot)$ denotes a semantic projection operator that maps discrete linguistic features into the continuous control space.

\subsection{Bilevel Optimization Objective}

The core mechanism is modeled as a Bilevel Optimization Problem (BOP), where the Leader optimizes the upper-level objective constrained by the Follower's lower-level optimality.

\subsubsection{Upper-Level Problem (Leader)}
The Leader aims to maximize the weighted risk mitigation score while strictly adhering to a computational budget constraint (token limits). We define the Leader's utility functional $J_L$ as:

\begin{equation}
    J_L(\mathbf{x}, \mathbf{h}) = \sum_{k=1}^{|\mathcal{R}|} \oint_{\mathcal{D}_k} \mathbf{w}_k^\top \cdot \xi(\mathbf{x}, r_k) \, d\mu - \lambda \cdot \Theta_{\text{budget}}(\mathbf{h})
    \label{sec:methodology}
\end{equation}

where:
\begin{itemize}
    \item $\mathbf{w}_k \in \mathbb{R}^4$ is the weight vector derived from the Q1-Q4 severity quantization.
    \item $\xi(\mathbf{x}, r_k)$ is the residual risk density function.
    \item $\Theta_{\text{budget}}(\mathbf{h}) = \max(0, \|\mathbf{h}\|_0 - \beta_{\text{audit}})$ is the ReLu-activated penalty for budget violation, enforcing resource-aware auditing.
\end{itemize}

\subsubsection{Lower-Level Problem (Follower)}
The Follower (CRA) generates the revised state $\mathbf{x}'$ by maximizing the conditional likelihood under the Large Language Model distribution $\mathbb{P}_{\theta}$, regularized by the Leader's constraints. The objective is formalized as:

\begin{multline}
    \mathbf{x}^*(\mathbf{h}) = \underset{\mathbf{x}' \in \mathcal{S}}{\arg\max} \Bigg[ \sum_{t=1}^{T} \log \mathbb{P}_{\theta}(x'_t \mid \mathbf{x}'_{<t}, \mathbf{x}, \mathbf{h}) \\
    - \gamma \cdot D_{\text{KL}}\Big( \pi_{\text{audit}}(\mathbf{x}') \,||\, \pi_{\text{target}}(\mathbf{h}) \Big) \Bigg]
\end{multline}

Here, $D_{\text{KL}}$ denotes the Kullback-Leibler divergence, enforcing that the posterior risk distribution $\pi_{\text{audit}}$ of the draft aligns with the target distribution $\pi_{\text{target}}$ defined by the Leader's 5-dimensional hint $\mathbf{h}$.

\subsection{Iterative Gradient Approximation}

The interaction is solved iteratively via a discrete approximation of gradient ascent. In the $t$-th inner iteration, the \textit{Local Verification Agent} (LVA) computes a fusion gradient to guide the text generation. The update rule is defined as:

\begin{equation}
    \mathbf{x}^{(t+1)} \leftarrow \mathbf{x}^{(t)} + \eta \nabla_{\mathbf{x}} \left( \mathcal{F}_{\text{fusion}}(\mathbf{x}^{(t)}, \mathbf{Q}_{\text{outer}}, \mathbf{Q}_{\text{inner}}^{(t)}) \right)
\end{equation}

where $\mathcal{F}_{\text{fusion}}$ represents the non-linear aggregation of outer expectations and inner local audits (e.g., Product-of-Experts), and $\eta$ is the learning rate implicitly controlled by the prompt intensity in the iterative loop.

\subsection{Theoretical Equilibrium Analysis}

We postulate two key theorems regarding the system performance and stability. Detailed proofs are provided in Appendix~\ref{sec:appendix_proofs}.

\begin{theorem}[Strict Superiority of RCBSF]
\label{thm:superiority}
Let $\mathcal{V}^*_{SE}$ be the utility at the Stackelberg Equilibrium, and $\mathcal{V}^*_{NE}$ be the utility at the Nash Equilibrium (unguided generation). Under the condition that the risk manifold is non-convex, the following inequality holds strictly:
\begin{equation}
    \mathcal{V}^*_{SE} \equiv \sup_{\mathbf{h} \in \mathcal{H}} J_L(\mathbf{x}^*(\mathbf{h}), \mathbf{h}) > J_L(\mathbf{x}^*(\emptyset), \emptyset) \equiv \mathcal{V}^*_{NE}
\end{equation}
\end{theorem}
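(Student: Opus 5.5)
The argument has two parts. The weak inequality $\mathcal{V}^*_{SE} \ge \mathcal{V}^*_{NE}$ follows from feasibility: the empty instruction lies in the Leader's strategy space. The strict inequality needs a perturbation argument that exhibits a nonempty hint doing strictly better.

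\textbf{Step 1 (weak inequality).} Since $\emptyset \in \mathcal{H}$, we have $\sup_{\mathbf{h}\in\mathcal{H}} J_L(\mathbf{x}^*(\mathbf{h}),\mathbf{h}) \ge J_L(\mathbf{x}^*(\emptyset),\emptyset)$ directly. The only thing to check is that $\emptyset$ is a legal instruction vector in the union construction of $\mathbf{h}$ (the case $K=0$). When $\mathbf{h}=\emptyset$, the KL term in the lower-level problem should reduce to a constant, or be dropped, so that $\mathbf{x}^*(\emptyset)$ is exactly the unguided maximum-likelihood revision. I would take this as the definition of the Nash (unguided) configuration.

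\textbf{Step 2 (strict inequality).} I will exhibit a single-element hint $\mathbf{h}_1=\{\mathbf{v}_{k^\star}\}$ that targets the risk $r_{k^\star}$ with the largest weighted residual $\oint_{\mathcal{D}_{k^\star}} \mathbf{w}_{k^\star}^\top \xi(\mathbf{x}^*(\emptyset), r_{k^\star})\,d\mu$ left by the unguided revision.
\begin{itemize}
\item \textbf{Budget term.} Choose $\beta_{\text{audit}} \ge 1$. Then $\|\mathbf{h}_1\|_0 = 1 \le \beta_{\text{audit}}$, so $\Theta_{\text{budget}}(\mathbf{h}_1)=0$. The leader therefore pays no penalty relative to $\emptyset$.
\item \textbf{Risk term.} It then suffices to show that $\mathbf{x}^*(\mathbf{h}_1)$ has strictly better weighted risk mitigation than $\mathbf{x}^*(\emptyset)$. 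I would argue via the follower's optimality. The KL term with weight $\gamma>0$ penalises any $\mathbf{x}'$ whose audited risk distribution still places mass on $r_{k^\star}$. So unless $\mathbf{x}^*(\emptyset)$ already satisfies $\pi_{\text{audit}}(\mathbf{x}^*(\emptyset))=\pi_{\text{target}}(\mathbf{h}_1)$, the follower's maximiser moves. A first-order (envelope) argument in $\gamma$ then shows that it moves in a direction that strictly decreases the residual on $r_{k^\star}$.
\end{itemize}

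\textbf{Role of non-convexity.} This is the step where I expect the main obstacle. The statement's hypothesis that the risk manifold is non-convex is presumably meant to rule out the degenerate case in which the unguided likelihood maximiser already lies at a global optimum of $J_L$. The idea is that in a non-convex landscape, myopic likelihood ascent stops at a stationary point that is not the leader-optimal one.

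To make this rigorous, I would first formalise an assumption: $\mathbf{x}^*(\emptyset)$ is not a maximiser of the risk-mitigation integral over $\mathcal{S}$. Non-convexity would be used only to justify this assumption generically. I would then impose mild regularity:
\begin{itemize}
\item $\xi$ is continuous;
\item the map $\mathbf{h}\mapsto\mathbf{x}^*(\mathbf{h})$ is continuous, given a unique follower best response;
\item the target distributions $\pi_{\text{target}}(\mathbf{h})$ are rich enough that some $\mathbf{h}$ points toward a strictly better basin.
\end{itemize}

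Under these assumptions the strict gap follows. I would also state explicitly that, without such a non-degeneracy condition, only the weak inequality of Step 1 holds. In the worst case, the proof of strictness reduces to this explicit assumption rather than being derived from non-convexity alone.
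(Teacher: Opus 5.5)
Your skeleton matches the paper's. The weak inequality comes from $\emptyset\in\mathcal H_B$. Strictness comes from a single hint built from the Evidence and Suggestion of a risk that unguided generation leaves unresolved. The paper's ``misalignment'' hypothesis $\delta(r_k,\mathbf x_{NE})>\epsilon$ plays the role of your non-degeneracy assumption, and the paper does not actually use non-convexity either. Zeroing $\Theta_{\text{budget}}(\mathbf h_1)$ through the ReLU is a cleaner replacement for the paper's ``the hint cost is negligible''. One correction: in the paper's lemma $\|\mathbf h\|_0$ counts tokens, so what you need is that one tuple fits within $\beta_{\text{audit}}$, not $\|\mathbf h_1\|_0=1$.

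The gap is your mechanism for strictness. The envelope theorem gives the derivative of the follower's optimal \emph{value}, namely $-D_{\text{KL}}$ at the maximiser. It does not give the direction in which $\mathbf x^*$ moves. Suppose you differentiate the first-order condition instead, assuming a smooth maximiser with Hessian $H\prec 0$. Then $\dot{\mathbf x}=H^{-1}\nabla g$, where $g$ is the KL term. This only guarantees that $g$ decreases, since $\nabla g^\top H^{-1}\nabla g<0$. The residual $\rho_{k^\star}$ on $r_{k^\star}$ changes at rate $\nabla\rho_{k^\star}^\top H^{-1}\nabla g$, whose sign is undetermined. Moreover, $J_L$ sums over all $k$, so other residuals may rise. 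There are three further problems:
\begin{itemize}
\item A local perturbation keeps the follower in the same basin, whereas your non-convexity story is about reaching a better one.
\item If the follower really optimises over token sequences, its argmax is locally constant in $\gamma$ near $0$, so a small $\gamma$ changes nothing.
\item Once $\Theta_{\text{budget}}=0$, your fallback assumption that some $\mathbf h$ ``points toward a strictly better basin'' is simply the conclusion.
\end{itemize}

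The paper instead uses a penalty-dominance argument with $\gamma$ large. The hint adds a forcing term $-\gamma\|\phi(\mathbf x)-\phi(s_k)\|^2$ to $J_F$. For $\gamma$ large enough, the follower's global optimiser is pushed into the region where $\delta(r_k,\mathbf x)<\epsilon$. This gives a finite gain on $r_k$ exceeding $w_{r_k}(\delta(r_k,\mathbf x_{NE})-\epsilon)>0$, which dominates the hint's cost. Both you and the paper read $J_L$ as increasing when residual risk falls, despite the $+\xi$ in its definition; you should fix that sign convention explicitly. The large-$\gamma$ argument is global and needs neither differentiability nor uniqueness of the best response, so non-convexity is harmless to it.

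To repair your Step~2, replace the envelope step with this argument and state two hypotheses explicitly:
\begin{itemize}
\item[(i)] the near-zero set of the forcing term lies inside the resolved region for $r_k$;
\item[(ii)] the shift does not raise the other weighted residuals by more than the gain on $r_k$.
\end{itemize}
The paper's proof tacitly assumes (ii) as well.
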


\begin{theorem}[Convergence of the Fusion Operator]
\label{thm:convergence}
The sequence of refined contracts $\{\mathbf{x}^{(t)}\}_{t=1}^N$ generated by the Follower converges to a stationary point $\mathbf{x}^*$ such that $\|\nabla \mathcal{L}_{\text{risk}}(\mathbf{x}^*)\| < \epsilon$.
\end{theorem}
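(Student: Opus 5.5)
The plan is to treat the LVA-guided update as an inexact gradient-descent scheme on the residual-risk functional $\mathcal{L}_{\text{risk}}$ and to use the standard non-convex descent argument. The statement has no hypotheses, so I would add the minimal regularity needed to make it meaningful and say so explicitly:
\begin{itemize}
\item[(A1)] $\mathcal{L}_{\text{risk}}:\mathcal{S}\to\mathbb{R}$ is bounded below by some $\mathcal{L}_{\inf}$ and is $L$-smooth, i.e.\ $\nabla\mathcal{L}_{\text{risk}}$ is $L$-Lipschitz on $\mathcal{S}$.
\item[(A2)] The fusion direction $\mathbf{g}^{(t)}:=\nabla_{\mathbf{x}}\mathcal{F}_{\text{fusion}}(\mathbf{x}^{(t)},\mathbf{Q}_{\text{outer}},\mathbf{Q}^{(t)}_{\text{inner}})$ is a descent direction for the risk. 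Concretely, $\langle \mathbf{g}^{(t)}, -\nabla\mathcal{L}_{\text{risk}}(\mathbf{x}^{(t)})\rangle \ge a\|\nabla\mathcal{L}_{\text{risk}}(\mathbf{x}^{(t)})\|^2$ and $\|\mathbf{g}^{(t)}\|\le b\|\nabla\mathcal{L}_{\text{risk}}(\mathbf{x}^{(t)})\|$ for constants $0<a\le b$.
\end{itemize}
Assumption (A2) is the formal content of the claim that the Product-of-Experts fusion of outer expectations and inner audits points toward lower residual risk. The GPA's fixed $\mathbf{Q}_{\text{outer}}$ supplies the alignment, and the LVA's $\mathbf{Q}^{(t)}_{\text{inner}}$ keeps the step magnitude comparable to the true gradient.

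With these assumptions, I would first apply the descent lemma to the update $\mathbf{x}^{(t+1)}=\mathbf{x}^{(t)}+\eta\,\mathbf{g}^{(t)}$:
\begin{equation*}
\mathcal{L}_{\text{risk}}(\mathbf{x}^{(t+1)}) \le \mathcal{L}_{\text{risk}}(\mathbf{x}^{(t)}) - \eta\Big(a-\tfrac{L\eta b^2}{2}\Big)\|\nabla\mathcal{L}_{\text{risk}}(\mathbf{x}^{(t)})\|^2 .
\end{equation*}
Choosing $\eta\le a/(Lb^2)$ makes the coefficient at least $\eta a/2$. Hence the sequence $\mathcal{L}_{\text{risk}}(\mathbf{x}^{(t)})$ is monotonically non-increasing, which matches the empirically observed monotone decrease of residual risk across revision rounds.

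Next I would telescope this inequality from $t=1$ to $N$ and use (A1) to get
\begin{equation*}
\min_{1\le t\le N}\|\nabla\mathcal{L}_{\text{risk}}(\mathbf{x}^{(t)})\|^2 \le \frac{2\big(\mathcal{L}_{\text{risk}}(\mathbf{x}^{(1)})-\mathcal{L}_{\inf}\big)}{\eta a N}.
\end{equation*}
Therefore, for $N > 2(\mathcal{L}_{\text{risk}}(\mathbf{x}^{(1)})-\mathcal{L}_{\inf})/(\eta a\epsilon^2)$, some iterate satisfies $\|\nabla\mathcal{L}_{\text{risk}}\|<\epsilon$. Taking $\mathbf{x}^*$ to be that iterate, or equivalently running the loop with an early-stopping rule on the verifier's signal, gives the statement.

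If one wants the whole sequence to converge to a single point rather than just reach an $\epsilon$-stationary iterate, I would add a Kurdyka--{\L}ojasiewicz assumption on $\mathcal{L}_{\text{risk}}$. The summable-step argument of Attouch--Bolte then yields $\sum_t\|\mathbf{x}^{(t+1)}-\mathbf{x}^{(t)}\|<\infty$, so the iterates form a Cauchy sequence.

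The main obstacle is justifying (A2). The LLM-driven update is discrete text editing, and the ``gradient'' $\nabla_{\mathbf{x}}\mathcal{F}_{\text{fusion}}$ exists only through the embedding $\mathcal{S}\subseteq\mathbb{R}^d$, so there is no a priori guarantee that a revision round decreases $\mathcal{L}_{\text{risk}}$. My first attempt would be to weaken (A2) to an inexact oracle: assume $\|\mathbf{g}^{(t)}+\nabla\mathcal{L}_{\text{risk}}(\mathbf{x}^{(t)})\|\le\delta_t$, where the LVA's rejection step (re-revise when the inner audit flags unresolved risk) keeps $\delta_t\le\tfrac12\|\nabla\mathcal{L}_{\text{risk}}(\mathbf{x}^{(t)})\|$. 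This recovers (A2) with $a=\tfrac12$ and $b=\tfrac32$, and the argument above goes through unchanged.
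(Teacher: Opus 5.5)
Your argument is correct under (A1)--(A2), but it takes a genuinely different route from the paper. The paper identifies one Follower round with an exact gradient step $\mathcal{T}(\mathbf{x})=\mathbf{x}-\eta\nabla\mathcal{L}(\mathbf{x})$, which is your (A2) with $a=b=1$. It assumes $L$-smoothness and strong convexity near the optimum with $\eta<2/L$, asserts that $\mathcal{T}$ is a contraction, and applies the Banach fixed-point theorem: the whole sequence then converges to a unique fixed point, which is an exact stationary point. (The triangle-inequality bound the paper displays only gives the factor $1+\eta L$. The contraction actually comes from using $\mu$-strong convexity to get $\kappa=\max\{|1-\eta\mu|,|1-\eta L|\}<1$, and it holds only locally, so the iterates must start inside the strongly convex basin.) Your descent-lemma-and-telescoping argument needs no convexity. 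It is therefore global and consistent with the non-convex landscape assumed in Theorem~\ref{thm:superiority}. It also tolerates a biased surrogate direction, which is a more honest model of an LLM revision round. Finally, it gives an explicit $O(1/\epsilon^{2})$ round count for an $\epsilon$-stationary iterate, which is exactly the form of the displayed conclusion. What it gives up is the literal claim that the sequence converges, along with the uniqueness and linear rate the contraction route provides. For your Kurdyka--{\L}ojasiewicz extension, add a boundedness hypothesis (e.g.\ bounded sublevel sets) so that a cluster point exists, as the Attouch--Bolte--Svaiter theorem requires. The relative-error condition it also needs does follow from (A1)--(A2), since Cauchy--Schwarz applied to (A2) gives $\|\mathbf{g}^{(t)}\|\ge a\|\nabla\mathcal{L}_{\text{risk}}(\mathbf{x}^{(t)})\|$, so the step length controls the gradient. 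One indexing detail: telescoping up to $t=N$ uses $\mathbf{x}^{(N+1)}$, so sum only to $N-1$ if the loop produces $N$ iterates.
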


\section{Experiments}
\label{sec:experiments}

To validate the theoretical superiority of our Stackelberg game-theoretic framework (RCBSF), we conducted extensive experiments focusing on contract risk resolution, contract quality, and token efficiency.

\subsection{Datasets}
\label{subsec:datasets}

We constructed a unified legal revision benchmark derived from four high-quality legal NLP datasets: PrivacyQA\cite{ravichander2019privacyqa}, ContractNLI\cite{koreeda2021contractnli}, MAUD\cite{roit2023maud1}, and CUAD\cite{atticus2021cuad}. These datasets provide a diverse range of contractual clauses, covering privacy policies, non-disclosure agreements and merger agreements.

Since raw legal documents often contain Private Identifiable Information(PII) and inconsistent formatting, a rigorous template standardization pipeline is implemented to improve aerial assessment. We processed the raw texts to generate clear, anonymized, and structurally consistent contract templates. Detailed annotation procedures and illustrative data examples are provided in Appendix~\ref{sec:appendix_data}.


\subsection{Baselines}
\label{subsec:baselines}

We compare the proposed RCBSF framework with four representative baselines commonly used in legal text generation and revision. Specifically, we include the \textit{Standard Zero-Shot} baseline. This approach employs a single LLM to generate revisions directly, bypassing explicit reasoning chains or role separation. It relies on the inherent legal reasoning capabilities demonstrated in prior benchmarks \cite{katz2024gpt, chalkidis2022lexglue, guha2024legalbench}. A non-retrieval baseline, CoT-Refinement, is further considered to incorporate step-by-step reasoning via Chain-of-Thought prompting \cite{wei2022chain}. This approach aligns with the legal syllogism-based prompting strategy \cite{jiang2023legal}. For retrieval-based methods, we adopt a standard \textit{RAG-Based Revision} baseline following the retrieval-augmented generation paradigm \cite{lewis2020retrieval}, conceptually similar to ChatLaw \cite{cui2023chatlaw} and Disc-LawLLM \cite{yue2023disclawllm}. Finally, we include \textit{Iterative Refinement (No Leader)}. Leveraging self-correction mechanisms \cite{madaan2024self, shinn2023reflexion}, a single agent iteratively critiques and revises its outputs. However, this setting omits the hierarchical leader–follower structure and budget constraints present in RCBSF.

\subsection{Evaluation Metrics} 
Our evaluation assesses performance across three key dimensions: risk resolution, contract quality, and token efficiency. Regarding risk resolution, we define the Risk Resolution Rate (RRR) as the percentage of ground-truth risks successfully mitigated, as determined by a GPT-5 evaluator. For Contract Quality(CQ), we measure the metric on a 0–100 scale that evaluates clarity, rigor, balance, and professionalism to ensure the revised text meets professional legal standards. To empirically verify the cost-effectiveness outlined in Theorem~\ref{thm:superiority}, we report the Token Efficiency Score (TES), defined as the number of risks resolved per 1,000 tokens.Specific evaluation protocols and prompts are detailed in Appendix~\ref{sec:appendix_evaluation}.








\begin{table*}[htb]
\centering
\resizebox{\textwidth}{!}{%
\begin{tabular}{c|c|ccc|ccc|ccc|ccc|ccc}
\toprule
\multirow{2}{*}{\textbf{Model}} & \multirow{2}{*}{\textbf{Method}} & \multicolumn{3}{c|}{\textbf{PrivacyQA}} & \multicolumn{3}{c|}{\textbf{ContractNLI}} & \multicolumn{3}{c|}{\textbf{MAUD}} & \multicolumn{3}{c|}{\textbf{CUAD}} & \multicolumn{3}{c}{\textbf{ALL (Avg)}} \\
\cmidrule{3-17}
 &  & \textbf{CQ} & \textbf{RRR} & \textbf{TES} & \textbf{CQ} & \textbf{RRR} & \textbf{TES} & \textbf{CQ} & \textbf{RRR} & \textbf{TES} & \textbf{CQ} & \textbf{RRR} & \textbf{TES} & \textbf{CQ} & \textbf{RRR} & \textbf{TES} \\
\midrule
\multirow{5}{*}{Qwen2.5-7B-Base} 
 & Standard & 71.45 & 67.34 & 72.89 & 70.12 & 65.45 & 70.56 & 69.88 & 72.34 & 71.45 & 70.23 & 65.67 & 70.12 & 70.42 & 67.70 & 71.26 \\
 & CoT & 73.89 & 69.56 & 74.34 & 72.45 & 67.89 & 72.67 & 72.12 & 74.56 & 73.89 & 72.56 & 68.34 & 72.89 & 72.76 & 70.09 & 73.45 \\
 & RAG & 77.23 & 73.89 & 80.56 & \cellcolor{secondblue}\textbf{77.12} & 72.45 & 78.34 & 76.56 & 78.45 & 78.12 & \cellcolor{secondblue}\textbf{76.45} & 71.23 & 77.56 & \cellcolor{secondblue}\textbf{76.84} & 74.01 & 78.65 \\
 & Iteration & \cellcolor{secondblue}\textbf{78.56} & \cellcolor{secondblue}\textbf{75.45} & \cellcolor{secondblue}\textbf{81.78} & 75.89 & \cellcolor{secondblue}\textbf{74.56} & \cellcolor{secondblue}\textbf{80.23} & \cellcolor{secondblue}\textbf{77.45} & \cellcolor{secondblue}\textbf{80.89} & \cellcolor{secondblue}\textbf{79.56} & 75.23 & \cellcolor{secondblue}\textbf{73.45} & \cellcolor{secondblue}\textbf{79.12} & 76.78 & \cellcolor{secondblue}\textbf{76.09} & \cellcolor{secondblue}\textbf{80.17} \\
 & RCBSF & \cellcolor{bestblue}\textbf{84.67} & \cellcolor{bestblue}\textbf{79.45} & \cellcolor{bestblue}\textbf{85.67} & \cellcolor{bestblue}\textbf{83.89} & \cellcolor{bestblue}\textbf{77.92} & \cellcolor{bestblue}\textbf{83.56} & \cellcolor{bestblue}\textbf{83.45} & \cellcolor{bestblue}\textbf{87.12} & \cellcolor{bestblue}\textbf{84.23} & \cellcolor{bestblue}\textbf{82.67} & \cellcolor{bestblue}\textbf{78.34} & \cellcolor{bestblue}\textbf{83.45} & \cellcolor{bestblue}\textbf{83.67} & \cellcolor{bestblue}\textbf{80.71} & \cellcolor{bestblue}\textbf{84.23} \\
\midrule
\multirow{5}{*}{Mistral-7B} 
 & Standard & 68.45 & 62.11 & 67.23 & 67.12 & 61.45 & 65.88 & 65.34 & 60.88 & 64.12 & 66.78 & 62.56 & 66.45 & 66.92 & 61.75 & 65.92 \\
 & CoT & 71.23 & 66.78 & 68.45 & 69.45 & 64.33 & 67.56 & 67.89 & 63.45 & 66.23 & 68.12 & 65.88 & 67.99 & 69.17 & 65.11 & 67.56 \\
 & RAG & 75.67 & 70.45 & 75.88 & 73.23 & 69.12 & 73.45 & 71.56 & 67.23 & 71.88 & 72.45 & 69.34 & 73.56 & 73.23 & 69.04 & 73.69 \\
 & Iteration & \cellcolor{secondblue}\textbf{77.88} & \cellcolor{secondblue}\textbf{74.12} & \cellcolor{secondblue}\textbf{77.45} & \cellcolor{secondblue}\textbf{75.67} & \cellcolor{secondblue}\textbf{72.56} & \cellcolor{secondblue}\textbf{76.12} & \cellcolor{secondblue}\textbf{73.45} & \cellcolor{secondblue}\textbf{70.89} & \cellcolor{secondblue}\textbf{74.56} & \cellcolor{secondblue}\textbf{75.23} & \cellcolor{secondblue}\textbf{72.45} & \cellcolor{secondblue}\textbf{77.12} & \cellcolor{secondblue}\textbf{75.56} & \cellcolor{secondblue}\textbf{72.51} & \cellcolor{secondblue}\textbf{76.31} \\
 & RCBSF & \cellcolor{bestblue}\textbf{82.33} & \cellcolor{bestblue}\textbf{77.56} & \cellcolor{bestblue}\textbf{81.23} & \cellcolor{bestblue}\textbf{80.76} & \cellcolor{bestblue}\textbf{76.44} & \cellcolor{bestblue}\textbf{80.92} & \cellcolor{bestblue}\textbf{78.91} & \cellcolor{bestblue}\textbf{75.23} & \cellcolor{bestblue}\textbf{78.55} & \cellcolor{bestblue}\textbf{78.92} & \cellcolor{bestblue}\textbf{77.11} & \cellcolor{bestblue}\textbf{81.56} & \cellcolor{bestblue}\textbf{80.23} & \cellcolor{bestblue}\textbf{76.59} & \cellcolor{bestblue}\textbf{80.57} \\
\midrule
\multirow{5}{*}{LawLLM-7B} 
 & Standard & 70.23 & 65.45 & 71.56 & 68.89 & 63.78 & 69.23 & 67.45 & 62.56 & 66.89 & 68.12 & 64.33 & 69.12 & 68.67 & 64.03 & 69.20 \\
 & CoT & 72.56 & 68.12 & 70.34 & 71.23 & 67.45 & 68.56 & 69.88 & 65.12 & 67.45 & 70.45 & 66.89 & 68.23 & 71.03 & 66.90 & 68.65 \\
 & RAG & 77.45 & 73.56 & 79.12 & 76.89 & 72.11 & 77.45 & \cellcolor{secondblue}\textbf{75.34} & 69.89 & \cellcolor{secondblue}\textbf{76.23} & \cellcolor{secondblue}\textbf{77.56} & 73.45 & 78.12 & 76.81 & 72.25 & 77.73 \\
 & Iteration & \cellcolor{secondblue}\textbf{79.12} & \cellcolor{secondblue}\textbf{76.23} & \cellcolor{secondblue}\textbf{81.45} & \cellcolor{secondblue}\textbf{78.56} & \cellcolor{secondblue}\textbf{75.34} & \cellcolor{secondblue}\textbf{79.88} & 74.23 & \cellcolor{secondblue}\textbf{72.45} & 75.67 & 76.12 & \cellcolor{secondblue}\textbf{76.89} & \cellcolor{secondblue}\textbf{80.45} & \cellcolor{secondblue}\textbf{77.01} & \cellcolor{secondblue}\textbf{75.23} & \cellcolor{secondblue}\textbf{79.36} \\
 & RCBSF & \cellcolor{bestblue}\textbf{84.55} & \cellcolor{bestblue}\textbf{79.12} & \cellcolor{bestblue}\textbf{84.34} & \cellcolor{bestblue}\textbf{82.66} & \cellcolor{bestblue}\textbf{78.45} & \cellcolor{bestblue}\textbf{83.12} & \cellcolor{bestblue}\textbf{80.77} & \cellcolor{bestblue}\textbf{76.34} & \cellcolor{bestblue}\textbf{79.56} & \cellcolor{bestblue}\textbf{82.56} & \cellcolor{bestblue}\textbf{79.23} & \cellcolor{bestblue}\textbf{83.12} & \cellcolor{bestblue}\textbf{82.64} & \cellcolor{bestblue}\textbf{78.29} & \cellcolor{bestblue}\textbf{82.54} \\
\midrule
\multirow{5}{*}{LexiLaw-6B} 
 & Standard & 65.45 & 65.12 & 65.88 & 65.23 & 65.05 & 65.34 & 65.15 & 65.11 & 65.21 & 65.78 & 65.22 & 65.56 & 65.40 & 65.13 & 65.50 \\
 & CoT & 67.89 & 66.56 & 67.45 & 67.11 & 66.34 & 66.88 & 66.45 & 65.89 & 66.23 & 67.56 & 66.78 & 66.99 & 67.25 & 66.39 & 66.89 \\
 & RAG & \cellcolor{secondblue}\textbf{73.56} & 68.45 & \cellcolor{secondblue}\textbf{74.12} & \cellcolor{secondblue}\textbf{71.78} & 67.56 & \cellcolor{secondblue}\textbf{72.45} & 68.89 & 66.78 & \cellcolor{secondblue}\textbf{69.56} & \cellcolor{secondblue}\textbf{72.45} & 67.89 & \cellcolor{secondblue}\textbf{72.23} & \cellcolor{secondblue}\textbf{71.67} & 67.67 & \cellcolor{secondblue}\textbf{72.09} \\
 & Iteration & 71.34 & \cellcolor{secondblue}\textbf{69.88} & 71.12 & 69.56 & \cellcolor{secondblue}\textbf{68.12} & 69.89 & \cellcolor{secondblue}\textbf{70.23} & \cellcolor{secondblue}\textbf{67.56} & 67.89 & 70.12 & \cellcolor{secondblue}\textbf{69.33} & 70.78 & 70.31 & \cellcolor{secondblue}\textbf{68.72} & 69.92 \\
 & RCBSF & \cellcolor{bestblue}\textbf{79.88} & \cellcolor{bestblue}\textbf{73.56} & \cellcolor{bestblue}\textbf{78.45} & \cellcolor{bestblue}\textbf{78.23} & \cellcolor{bestblue}\textbf{70.89} & \cellcolor{bestblue}\textbf{77.12} & \cellcolor{bestblue}\textbf{74.88} & \cellcolor{bestblue}\textbf{70.45} & \cellcolor{bestblue}\textbf{73.78} & \cellcolor{bestblue}\textbf{76.22} & \cellcolor{bestblue}\textbf{71.45} & \cellcolor{bestblue}\textbf{77.34} & \cellcolor{bestblue}\textbf{77.30} & \cellcolor{bestblue}\textbf{71.59} & \cellcolor{bestblue}\textbf{76.67} \\
\midrule
\multirow{5}{*}{Qwen2.5-7B-Chat} 
 & Standard & 74.34 & 70.12 & 75.67 & 73.12 & 68.45 & 73.89 & 73.45 & 76.12 & 74.23 & 72.89 & 68.56 & 73.45 & 73.45 & 70.81 & 74.31 \\
 & CoT & 76.89 & 72.45 & 77.12 & 75.45 & 70.89 & 75.67 & 75.12 & 78.34 & 76.45 & 75.23 & 71.34 & 75.89 & 75.67 & 73.26 & 76.28 \\
 & RAG & 80.12 & 76.89 & 83.45 & \cellcolor{secondblue}\textbf{80.23} & 75.45 & 81.23 & 79.56 & 82.11 & 81.45 & \cellcolor{secondblue}\textbf{79.34} & 74.56 & 80.78 & \cellcolor{secondblue}\textbf{79.81} & 77.25 & 81.73 \\
 & Iteration & \cellcolor{secondblue}\textbf{81.56} & \cellcolor{secondblue}\textbf{79.23} & \cellcolor{secondblue}\textbf{84.78} & 78.89 & \cellcolor{secondblue}\textbf{77.56} & \cellcolor{secondblue}\textbf{83.12} & \cellcolor{secondblue}\textbf{80.45} & \cellcolor{secondblue}\textbf{84.56} & \cellcolor{secondblue}\textbf{83.23} & 78.12 & \cellcolor{secondblue}\textbf{76.89} & \cellcolor{secondblue}\textbf{82.45} & 79.76 & \cellcolor{secondblue}\textbf{79.56} & \cellcolor{secondblue}\textbf{83.40} \\
 & RCBSF & \cellcolor{bestblue}\textbf{87.75} & \cellcolor{bestblue}\textbf{83.08} & \cellcolor{bestblue}\textbf{88.76} & \cellcolor{bestblue}\textbf{87.03} & \cellcolor{bestblue}\textbf{81.20} & \cellcolor{bestblue}\textbf{86.63} & \cellcolor{bestblue}\textbf{86.89} & \cellcolor{bestblue}\textbf{90.83} & \cellcolor{bestblue}\textbf{87.35} & \cellcolor{bestblue}\textbf{85.82} & \cellcolor{bestblue}\textbf{81.72} & \cellcolor{bestblue}\textbf{86.42} & \cellcolor{bestblue}\textbf{86.87} & \cellcolor{bestblue}\textbf{84.21} & \cellcolor{bestblue}\textbf{87.29} \\
\bottomrule
\end{tabular}
}
\caption{Main Results on Legal Contract Revision. We compare different LLM backbones across four datasets. CQ denotes Contract Quality as the average score scaled to 0-100 that evaluates clarity, rigor, balance, and professionalism. RRR indicates Risk Resolution Rate (\%). TES represents Token Efficiency Score (\%). The \colorbox{bestblue}{\textbf{dark blue}} and \colorbox{secondblue}{\textbf{light blue}} cells indicate the best and second-best performance within each model group, respectively.}
\label{tab:main_results}
\end{table*}

\subsection{Implementation Details}
\label{subsec:implementation}

To control revision efficiency and reduce computational demands on hardware, Qwen2.5-7B-Chat is employed as the backbone LLM for the entire RCBSF framework. It encompasses the Global Prescriptive Agent (GPA), the Constrained Revision Agent (CRA), and the Local Verification Agent (LVA). The detailed prompts are shown in Appendix~\ref{sec:appendix_prompt}.We utilized the training set of CUAD to tune the core hyperparameters of the Stackelberg game, specifically the Leader's risk weighting vector and the operational token budgets. Generally, we find that the framework exhibits strong robustness across varying resource constraints. All experiments were conducted on a cluster of $8 \times$ NVIDIA A100 (80GB) GPUs. Comprehensive implementation settings and specific numerical configurations are detailed in Appendix~\ref{sec:appendix_implementation}.

\section{Results}
\label{sec:results}

We present the comparative results of our Risk-Constrained Bilevel Stackelberg Framework (RCBSF) against the baseline models in Table~\ref{tab:main_results}. The results empirically validate the theoretical superiority of the Stackelberg formulation proposed in Theorem~\ref{thm:superiority}. Our method consistently achieves the best performance across all four datasets (PrivacyQA, ContractNLI, MAUD, and CUAD) and various LLM backbones.

\paragraph{Superiority in Risk Resolution.}
As shown in Table~\ref{tab:main_results}, our method outperforms the strongest baseline (Iterative Refinement) across all model backbones. Specifically, using the Qwen2.5-7B-Chat backbone, RCBSF achieves a Risk Resolution Rate (RRR) of 84.21\% on average, surpassing the Iteration baseline (79.56\%) and showing a substantial improvement over the Standard method (70.81\%). This confirms that the separation of the GPA and the Follower system (comprising the CRA and LVA) allows for more precise risk identification and resolution compared to single-agent approaches. While methods like RAG and CoT improve upon the vanilla baseline, RCBSF consistently maintains the highest resolution rates by explicitly guiding the revision process via the GPA's strategic constraints.

\paragraph{Enhancement in Contract Quality.}
A key advantage of RCBSF is its ability to maintain high linguistic standards while mitigating risks. We evaluate Contract Quality (CQ), a composite metric scaling from 0 to 100 that assesses clarity, rigor, balance, and professionalism. As indicated by the dark blue cells in Table~\ref{tab:main_results}, RCBSF dominates this metric across all settings. 

To visualize this granular superiority, we present a diagonal block heatmap in Figure~\ref{fig:heatmap}. The visualization demonstrates that GPA-driven constraints guide the CRA to produce polished, professional-grade legal text. This avoids the readability degradation often seen in rigid rule-based methods. As shown in the heatmap, the top row corresponds to RCBSF, where dark red and blue regions denote higher intensity scores. In this visualization, RCBSF consistently outperforms the RAG and Iteration baselines across Clarity, Rigor, Balance, and Professionalism. Detailed multi-dimensional results for each model and method across all datasets can be found in Appendix~\ref{sec:appendix_results}.

\begin{figure}[h]
    \centering
    \includegraphics[width=1\linewidth]{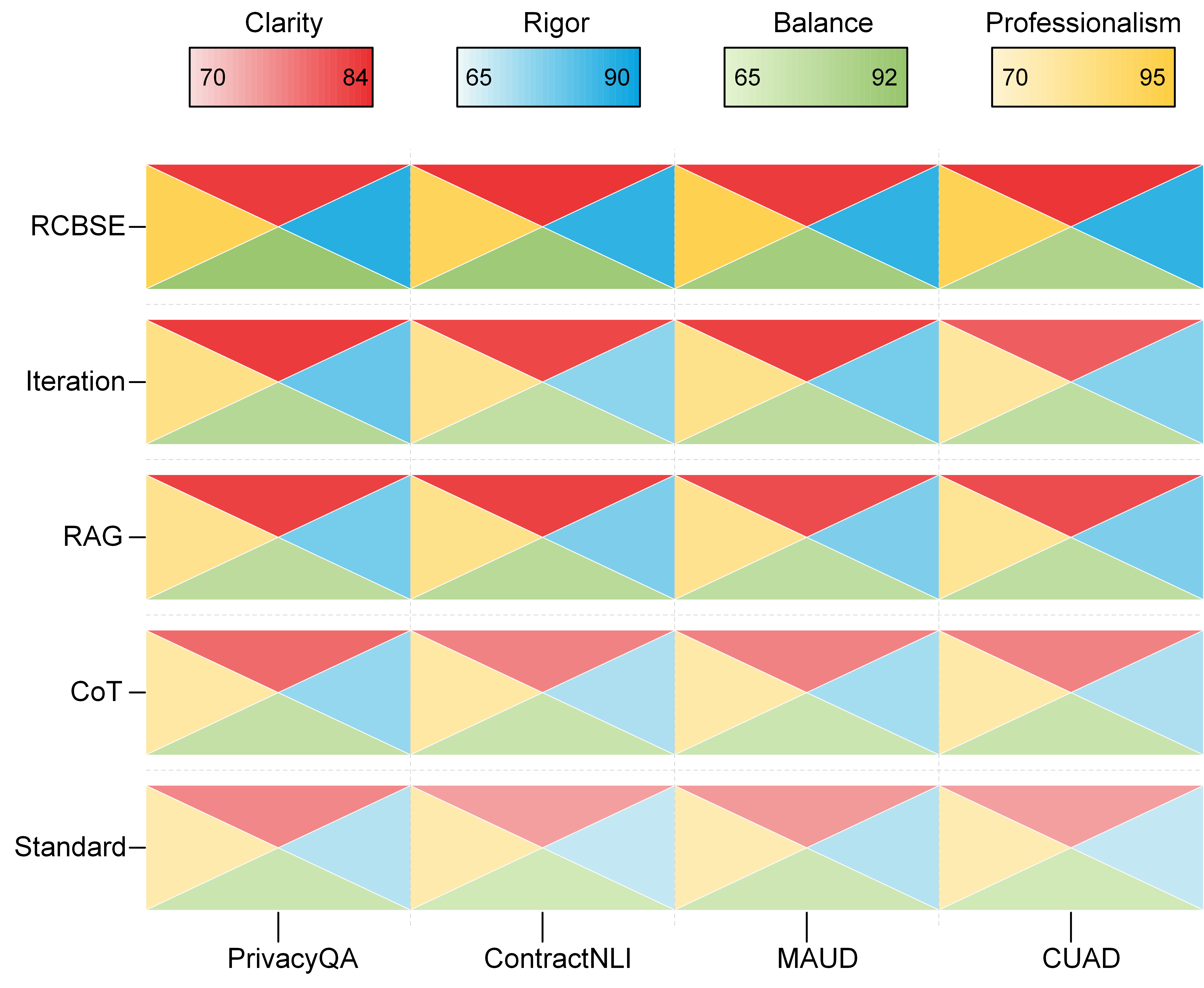} 
    \caption{Fine-grained Quality Metrics Breakdown. We present a diagonal block heatmap illustrating the performance of the Qwen2.5-7B-Chat model across four fine-grained quality metrics (i.e., Clarity, Rigor, Balance, and Professionalism) for each individual dataset.}
    \label{fig:heatmap}
\end{figure}

\paragraph{Efficiency and Theorem Validation.}
The Token Efficiency Score (TES) results further support the utility-cost optimization of our framework. Unlike CoT-based methods that generate excessive reasoning text, our Stackelberg framework guides the CRA efficiently. RCBSF achieves the highest TES across all backbones, reaching 87.29\% with Qwen2.5-7B-Chat, compared to 83.40\% for the Iteration baseline and 74.31\% for the Standard approach. This empirically proves \textit{Theorem~\ref{thm:convergence}}: the GPA's discrete prompt updates act as an efficient gradient approximation, guiding the CRA to the optimal state $x^*$ with minimal token waste.

\section{Analysis and Ablation Studies}
\label{sec:analysis}

We deconstruct the RCBSF framework to validate individual components (Ablation) and assess hyperparameter robustness (Sensitivity). We also examine qualitative examples of the Leader-Follower dynamic.

\begin{table}[h]
\centering
\resizebox{\columnwidth}{!}{
\Huge
\begin{tabular}{l|cc|cc|c|c}
\toprule
\textbf{Model Variant} & \textbf{RRR (\%)} & \textbf{$\Delta$} & \textbf{TES (\%)} & \textbf{$\Delta$} & \textbf{BLEU-4} & \textbf{Win Rate} \\
\midrule
\textbf{Full RCBSF (Ours)} & \textbf{84.21} & -- & \textbf{87.29} & -- & \textbf{85.65} & -- \\
\midrule
\quad w/o 5-Dim Constraints & 73.15 & \textcolor{red}{-11.06} & 76.25 & \textcolor{red}{-11.04} & 82.15 & 15.2\% \\
\quad w/o Budget Penalty ($\lambda=0$) & \underline{84.85} & \textcolor{green}{+0.64} & 75.15 & \textcolor{red}{-12.14} & 83.45 & 35.8\% \\
\quad w/o Iterative ($K=1$) & 76.45 & \textcolor{red}{-7.76} & \underline{90.45} & \textcolor{green}{+3.16} & 84.25 & 28.4\% \\
\quad w/o Q-Score Weighting & 79.35 & \textcolor{red}{-4.86} & 81.15 & \textcolor{red}{-6.14} & 84.85 & 41.5\% \\
\bottomrule
\end{tabular}
}
\caption{Ablation Study on the Qwen2.5-7B-Chat Backbone. We investigate the contribution of each component in the RCBSF framework. RRR: Risk Resolution Rate (\%). TES: Token Efficiency Score (Risks tokens, \%). Win Rate: Pairwise human preference against the Full model. The $\Delta$ columns show the relative drop compared to the full model.}
\label{tab:ablation}
\end{table}

\subsection{Ablation Studies}
To verify that the performance gains are derived from our specific architectural decisions rather than model size or random variance, we conducted component-wise ablation studies on the Qwen2.5-7B-Chat backbone. The results are summarized in Table~\ref{tab:ablation}. More experimental results can be found in Appendix s~\ref{subsec:ablation_breakdown}.

\paragraph{Impact of 5-Dimension Guidance.}
We replaced the detailed 5-dimension hint (Category, Location, Evidence, Issue, Suggestion) from the Global Prescriptive Agent (GPA) with a generic Risk Category label. As observed in Row 2 of Table~\ref{tab:ablation}, removing these constraints causes the most significant performance degradation: the Risk Resolution Rate (RRR) drops by 11.06\% and the Win Rate falls to a mere 15.2\%. This confirms that the specific Evidence and Suggestion fields are critical for guiding the Constrained Revision Agent (CRA). The hallucinated fixes in irrelevant contract clauses are prevented.

\paragraph{Impact of Budget Constraints (Efficiency and Performance Trade-off).}
A crucial finding in Row 3 is the effect of the budget penalty ($\lambda$). Removing the penalty ($\lambda=0$) results in a marginal RRR increase of 0.64\%. This gain comes at a substantial efficiency cost, reducing the Token Efficiency Score (TES) by 12.14\%. Without the budget-aware weighting, the GPA tends to flag an excessive number of minor issues (e.g., stylistic or low-impact edits). Longer revisions are generated by the CRA without commensurate improvements in legal quality(BLEU-4 decreases to 83.45). This demonstrates that our budget constraint effectively forces the system to prioritize Critical and Major risks. A better balance between rigor and conciseness is achieved.

\paragraph{Impact of Iterative Refinement.}
Reducing the Stackelberg game to a single round ($K=1$) leads to a clear decline in RRR (-7.76\%). While the single-pass generation is more token-efficient (TES +3.16\%), it fails to resolve complex, interdependent risks. The iterative process allows the Local Verification Agent (LVA) to re-evaluate the CRA revisions, catching residual risks or errors introduced during the initial modification.

\paragraph{Impact of Q-Score Weighting.}
Finally, the Q-Score weighting mechanism (Row 5) is removed, leading to a simultaneous drop in both effectiveness RRR (-4.86\%) and efficiency TES (-6.14\%). This indicates that prioritizing risks based on their semantic severity (Q-Score) is essential for guiding the optimization gradient toward the most impactful revisions.

\subsection{Sensitivity Analysis}
We analyzed the stability of our framework with respect to key hyperparameters. Detailed numerical results and visualizations for these sensitivity analyses are provided in Appendix~\ref{hyperparameter}.

\paragraph{Iteration Rounds ($K$).}
We investigate the trade-off between performance (RRR) and computational cost by varying the game rounds from $K=1$ to $8$. The performance improves rapidly in the early stages, increasing from 74.24\% at $K=1$ to above 84.21\% by $K=3$. However, beyond the third round, the marginal gain diminishes while the token cost continues to grow linearly. This identifies $K=3$ as the Pareto-optimal stopping point.

\paragraph{Softmax Temperature ($\tau$).}
Experimental analysis reveals the impact of GPA temperature on risk resolution. We observe a clear bell-shaped curve peaking at $\tau=1.0$. Lower temperatures ($\tau=0.5$) lead to overly conservative edits that miss subtle risks, while higher entropy ($\tau=2.0$) destabilizes the optimization process. The default $\tau=1.0$ strikes the best balance between exploration and exploitation.

\subsection{Qualitative Case Study}
To intuitively demonstrate the superiority of the Stackelberg framework, we present a comparison example in Figure~\ref{fig:case_study}. In the Force Majeure case, the Baseline model (Standard) correctly identifies the missing notification period. However, it falsely claims to have fixed the issue while leaving the actual text unchanged. In contrast, our GPA explicitly generates a structured tuple: \textit{\{Location: Clause 3.1, Issue: Missing time limit, Suggestion: Add `within 48 hours'\}}. The CRA then strictly follows this directive, seamlessly integrating the 48-hour constraint into the final contract text. To better understand the behavior of the Leader-Follower dynamic, we provide multiple case studies in Appendix~\ref{sec:appendix_cases}.

\begin{figure}[t]
\centering
\fbox{
\begin{minipage}{0.95\linewidth}
\footnotesize
\textbf{Input Clause:} ``If a Force Majeure Event prevents a party from complying... that party shall not be liable.'' \\
\rule{\linewidth}{0.4pt}
\textbf{Baseline Output:} ``If a Force Majeure Event prevents a party from complying with its obligations, that party shall not be held liable for such failure.'' \textcolor{red}{(Risk: Open-ended loophole)} \\
\rule{\linewidth}{0.4pt}
\textbf{RCBSF Leader (Hint):} 
\textit{Suggestion:} Add requirement to notify the other party within 48 hours. \\
\textbf{RCBSF Follower (Output):} ``...that party shall not be liable, \textbf{provided that it notifies the other party within 48 hours of the event}...'' \textcolor{green}{(Risk: Resolved)}
\end{minipage}
}
\caption{Qualitative comparison. The Baseline acts as a text polisher, while RCBSF acts as a strategic negotiator, inserting missing protective clauses based on the Leader's specific suggestion.}
\label{fig:case_study}
\end{figure}

\section{Conclusion}
\label{sec:conclusion}

In this paper, we introduce the Risk-Constrained Bilevel Stackelberg Framework (RCBSF), a novel framework designed to resolve the critical trade-off between risk mitigation and semantic preservation in high-stakes contract automation. RCBSF formalizes revision as a hierarchical game, decoupling strategic auditing from execution. The framework employs a Global Prescriptive Agent (GPA) to guide a Follower system, which comprises the Constrained Revision Agent (CRA) and Local Verification Agent (LVA). The GPA enforces strict constraints to ensure the Follower iteratively optimizes clauses within precise bounds. Evaluations on legal benchmarks confirm that RCBSF effectively balances strict risk compliance with linguistic fidelity. This establishes a new paradigm for reliable, constraint-aware legal text generation.


\section*{Limitations}
\label{sec:limitations}

One limitation of our research lies in the inherent properties of the dataset. The contract clauses in our unified benchmark are predominantly derived from publicly available authoritative datasets (e.g., MAUD, ContractNLI, CUAD). These clauses are structurally consistent and rarely involve highly complex cross-clause conflicts or niche scenario disputes. We have enriched data diversity through multi-source integration. However, more realistic and intricate scenarios remain insufficiently explored. Specifically, implicit unfairness arising from the interaction of multiple vague or inconsistent clauses, as well as interdependent risks that span multiple contract sections, still lack in-depth investigation.

Another limitation lies in the jurisdictional and linguistic bias of our validation. The datasets used for validation are predominantly sourced from United States and Common Law jurisdictions and are entirely in English. Consequently, the efficacy of our model in Civil Law systems (e.g., Germany, France) or non-English languages remains unverified. Legal concepts such as Force Majeure or Indemnification vary significantly across legal systems, and our current prompt engineering may implicitly encode US-centric legal norms.


\section*{Ethical Considerations}
\label{sec:ethics}

Deploying Large Language Models in high-stakes legal domains introduces inherent risks regarding hallucination and potential bias. While our RCBSF framework aims to mitigate errors through evidence grounding and token efficiency, it is designed strictly as an \textit{assistive tool} rather than a substitute for qualified legal counsel. Users must exercise appropriate care and verify all automated revisions to avoid automation bias. Regarding data privacy, all experiments utilized publicly available datasets (e.g., CUAD, PrivacyQA) with permissible licenses, and we applied strict de-identification pipelines to ensure no Personally Identifiable Information (PII) was exposed.

\section*{Acknowledgements}
This work was supported by the National Natural Science Foundation of China (No. 52578347).



\bibliography{main}



\newpage
\appendix

\section*{Appendices}

Within this supplementary material, we elaborate on the following aspects:

\begin{itemize}[leftmargin=*, label=$\bullet$]
    \item Appendix~\ref{sec:appendix_data}: Dataset Construction and Preprocessing
    \item Appendix~\ref{sec:appendix_proofs}: Mathematical Proofs and Derivations
    \item Appendix~\ref{sec:appendix_implementation}: Implementation Details
    \item Appendix~\ref{sec:appendix_evaluation}: Evaluation Metrics and Protocols
    \item Appendix~\ref{sec:appendix_results}: Detailed Experimental Results
    \item Appendix~\ref{sec:appendix_cases}: Qualitative Case Studies
    \item Appendix~\ref{sec:appendix_prompt}: Prompt Engineering
\end{itemize}

\section{Dataset Construction and Preprocessing}
\label{sec:appendix_data}

To evaluate the robustness of the \textsc{RCBSF} framework, we constructed a unified legal contract revision benchmark. This dataset was derived from four authoritative legal NLP corpora: \textsc{CUAD} \citep{atticus2021cuad}, \textsc{MAUD} \citep{roit2023maud1}, \textsc{ContractNLI} \citep{koreeda2021contractnli}, and \textsc{PrivacyQA} \citep{ravichander2019privacyqa}. 

However, raw legal documents often contain inconsistent formatting, excessive Personally Identifiable Information (PII), and unstructured clauses that hinder standardized evaluation. To address this, we implemented a rigorous \textit{Template Standardization and Risk Enrichment Pipeline}. The pipeline consists of three distinct stages designed to transform raw text into high-quality, privacy-preserved, and risk-annotated samples.

\subsection{Data Distribution}
To ensure the model's robust generalization capability across heterogeneous legal domains, we constructed a comprehensive dataset comprising 711 legal contracts spanning 41 distinct categories. As detailed in Table~\ref{tab:dataset_stats}, this unified benchmark aggregates high-quality clauses from four diverse sources: CUAD, MAUD, ContractNLI, and PrivacyQA. 

The dataset exhibits a long-tail distribution as illustrated in Figure~\ref{fig:contract_dist}. Among the 41 categories, \textit{Merger} agreements constitute the largest proportion at 18.4\%, followed by \textit{Disclosure} at 8.6\%. Other significant categories include \textit{Manufacturing} (4.5\%) and \textit{Marketing} (4.3\%), while specialized domains such as \textit{Confidentiality} (3.0\%) and \textit{Affiliate} agreements (1.3\%) ensure coverage of niche legal contexts. This diverse composition is critical for evaluating the model's adaptability to varying terminologies and clause structures.

\begin{table}[htb]
\centering
\resizebox{\linewidth}{!}{%
\begin{tabular}{lcc}
\hline
\textbf{Dataset} & \textbf{Num of Samples} & \textbf{Num of Categories} \\
\hline
PrivacyQA & 7 & 2 \\
ContractNLI & 92 & 7 \\
MAUD & 150 & 7 \\
CUAD & 462 & 25 \\
\hline
\textbf{Total} & \textbf{711} & \textbf{41} \\
\hline
\end{tabular}%
}
\caption{Statistics of the processed Unified Legal Benchmark. The dataset aggregates 711 samples from diverse legal sources, covering 41 distinct categories to ensure broad domain coverage.}
\label{tab:dataset_stats}
\end{table}

\begin{figure}[h]
    \centering
    \includegraphics[width=1\linewidth]{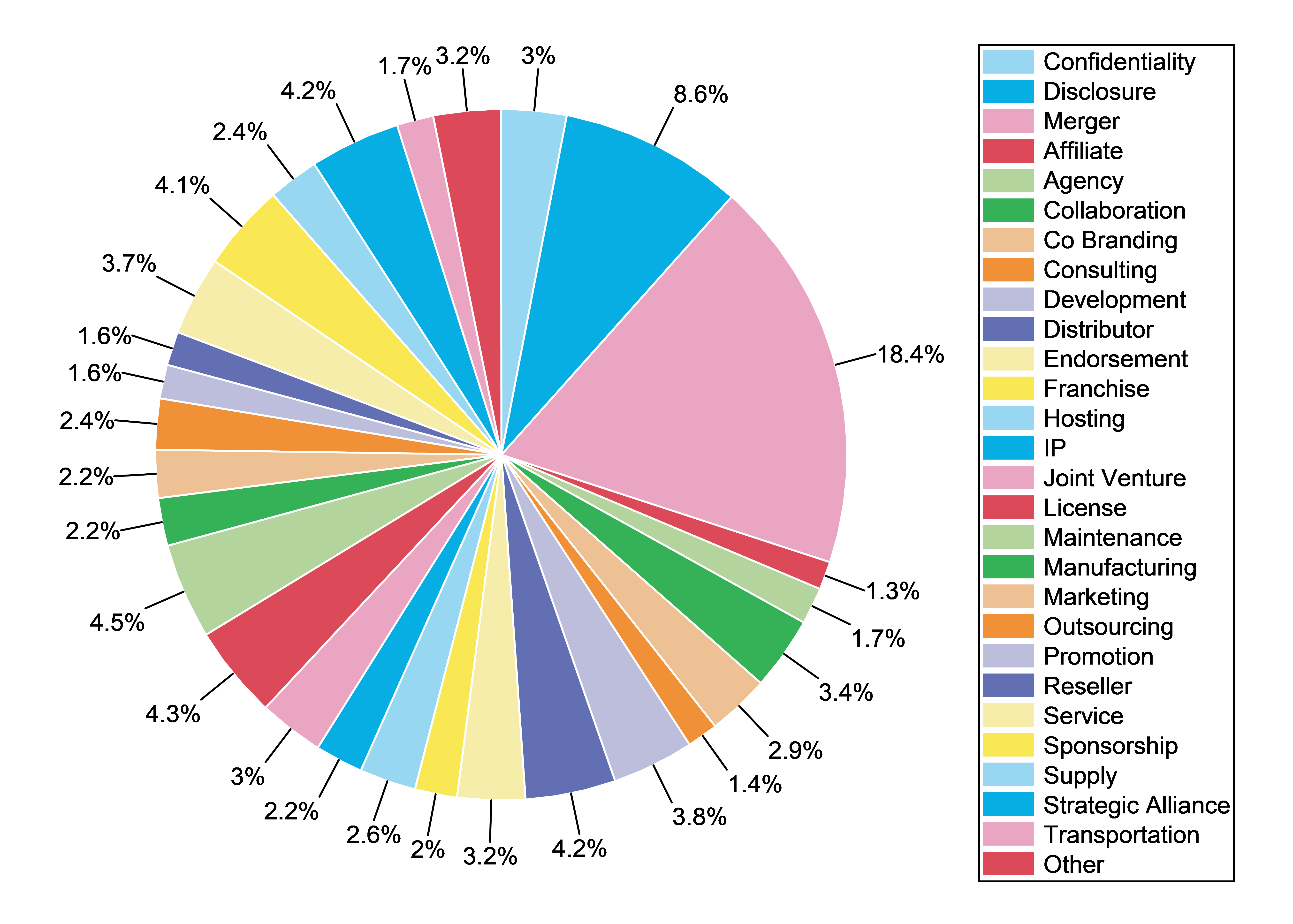}
    \caption{Distribution of Contract Categories. The dataset features a long-tail distribution across 41 legal domains, with significant representation in major categories like Merger and Disclosure.}
    \label{fig:contract_dist}
\end{figure}

\subsection{Data Processing Pipeline}

This section details the multi-stage pipeline designed to transform raw legal documents into a high-quality, annotated dataset. The process moves from initial classification to content standardization, risk enrichment, and finally, granular legal provision recommendation.

\subsubsection{Automated Category Classification (Stage 1)}

The initial phase of the pipeline focuses on ingesting heterogeneous file headers and normalizing them into a canonical taxonomy. As raw legal documents often contain inconsistent naming conventions and formatting artifacts, a rule-based approach is employed to ensure data integrity before meaningful processing begins.

As illustrated in Figure~\ref{fig:stage0_pipeline}, the system executes a three-step logic. First, the Ingestion phase extracts raw text headers from source files (PDF/TXT). This is followed by Noise Reduction, where a regex-based cleaning function ($f_{clean}$) is applied to remove file extensions, numbering artifacts, and redundant whitespace. Finally, the Canonicalization step employs a hierarchical keyword mapping algorithm to assign the cleaned header to a target category ($C_{target}$), utilizing logic such as mapping NDA, Confidentiality, or Non-Disclosure to a unified \textit{NDA} class.

\begin{figure}[htb]
    \begin{tcolorbox}[
        colback=white, 
        colframe=gray!50!black, 
        title=\textbf{Stage 1: Automated Category Classification Pipeline},
        fonttitle=\bfseries,
        arc=1mm
    ]
    \small
    \textbf{1. Input Ingestion} \\
    Raw PDF/TXT text headers are ingested ($H_{raw}$).
        
    \textbf{2. Noise Reduction (Regex)} \\
    Apply cleaning function $f_{clean}(H_{raw})$: 
    \begin{enumerate}[label=\alph*), leftmargin=*, topsep=0pt, itemsep=0pt, parsep=0pt]    
        \item Remove file extensions (`.pdf`, `.docx`).
        \item Strip numbering artifacts .
        \item Normalize whitespace to single space.
    \end{enumerate}
    
    \textbf{3. Keyword Mapping (Canonicalization)} \\
    Map headers to $C_{target} \in \mathcal{C}$ using hierarchical keyword matching: 
    \begin{enumerate}[label=\alph*), leftmargin=*, topsep=0pt, itemsep=0pt, parsep=0pt]
        \item \texttt{IF} "license" AND "software" $\rightarrow$ \textit{Software License}
        \item \texttt{IF} "consulting" OR "service" $\rightarrow$ \textit{Service Agreement}
        \item \texttt{IF} "confidential" OR "nds" $\rightarrow$ \textit{NDA}
    \end{enumerate}

    \textbf{4. Output} \\
    Structured pair: \texttt{(Category\_ID, Clean\_Header)}.
    \end{tcolorbox}
    \caption{The automated classification logic used in Stage 0 to standardize noisy raw file headers into canonical legal categories.}
    \label{fig:stage0_pipeline}
\end{figure}

\subsubsection{Template Standardization (Stage 2)}

Following classification, the raw contract text undergoes a standardization process to remove identifiers and unify structure. This is achieved using a Large Language Model (ChatGPT-5) configured to act as a generic summarizer. The objective is to produce a clean slate template that retains the legal intent of the original document while neutralizing specific party details.

As presented in Figure~\ref{fig:stage1_prompt}, the prompt engineering constraints specifically instruct the model to perform three key functions. First, it must normalize the structure by reorganizing the content into seven standard sections (e.g., Definitions, Indemnification, Termination). Second, it is required to anonymize PII by replacing sensitive entities and values with placeholders such as \texttt{[Party A]} and \texttt{[Amount]}. Lastly, the model must output plain text, ensuring the result is machine-readable without markdown formatting.

\begin{figure}[htb]
    \begin{tcolorbox}[
        colback=white, 
        colframe=gray!50!black, 
        title=\textbf{Stage 2: Summarizer Prompt},
        fonttitle=\bfseries,
        arc=1mm
    ]
    \small
    \textbf{System Role:} 
    You are a Senior Legal Counsel with 20 years of experience in contract law. Your goal is to draft precise, neutral, and enforceable contract templates.

    \textbf{Task Instructions:}
    Transform the provided source text into a standardized template following these strict constraints:
    
    \textbf{1. Structural Normalization:}
    Organize the output into these exact sections:
    \begin{itemize}[leftmargin=*, nosep]
        \item Definitions
        \item Scope of Services
        \item Fees and Payment
        \item IP Ownership
        \item Confidentiality \& Data Protection
        \item Indemnification \& Liability
        \item Term \& Termination
    \end{itemize}

    \textbf{2. PII Anonymization (Strict):}
    \begin{itemize}[leftmargin=*, nosep]
        \item Replace entity names with \texttt{[Party A]} / \texttt{[Party B]}.
        \item Replace specific dates with \texttt{[Effective Date]}.
        \item Replace monetary values with \texttt{[Amount]}.
    \end{itemize}

    \textbf{3. Formatting:}
    Output \textbf{PLAIN TEXT ONLY}. Do not use Markdown bolding or code blocks. Keep length $\le 1500$ words.
    \end{tcolorbox}
    \caption{The instruction set for LLM used to generate standardized, anonymized legal templates from raw source text.}
    \label{fig:stage1_prompt}
\end{figure}

\subsubsection{Risk Enrichment (Stage 3)}

To facilitate the evaluation of risk detection models, the standardized templates are processed by a new Large Language Model (ChatGPT-5), which functions as an adversarial auditor. This stage generates synthetic standard truth data by simulating a high-scrutiny legal review.

As detailed in Figure~\ref{fig:stage2_prompt}, the prompt directs the model to identify specific vulnerabilities within the contract (e.g., uncapped liability) and propose corresponding mitigations. Crucially, the prompt includes negative constraints to filter out trivial errors (such as typos or missing signatures), ensuring the generated dataset focuses on substantive legal risks. The output is structured as a JSON object containing risk categories, issue descriptions, and mitigation strategies.

\begin{figure}[htb]
    \begin{tcolorbox}[
        colback=white, 
        colframe=gray!50!black, 
        title=\textbf{Stage 3: Auditor Prompt},
        fonttitle=\bfseries,
        arc=1mm
    ]
    \small
    \textbf{System Role:}
    You are an Adversarial Risk Auditor. Your job is to stress-test contracts for operational and legal vulnerabilities.

    \textbf{Input:} Standardized Contract Template ($T_{std}$).

    \textbf{Task Instructions:}
    Identify 8--12 specific risks. For each risk, generate a "Risk-Mitigation" pair.

    \textbf{Negative Constraints (FILTER):}
    \textit{DO NOT} generate risks related to:
    \begin{itemize}[leftmargin=*, nosep]
        \item Missing signatures or dates.
        \item Typos or formatting errors.
        \item Generic "Governing Law" preferences.
    \end{itemize}

    \textbf{Output Format (JSON):}
    \begin{verbatim}
    {
      "risks": [
        {
          "category": "IP Indemnification",
          "issue": "Clause fails to cap liability 
                    for third-party IP claims.",
          "mitigation": "1. Insert liability cap.
                         2. Carve out exceptions 
                         for gross negligence."
        }
      ]
    }
    \end{verbatim}
    \end{tcolorbox}
    \caption{The adversarial prompt for LLM, designed to generate the standard truth risk dataset used for calculating the Risk Resolution Rate (RRR).}
    \label{fig:stage2_prompt}
\end{figure}

\section{Mathematical Proofs and Derivations}
\label{sec:appendix_proofs}

\begin{figure}[h]
    \centering
    \includegraphics[width=1\linewidth]{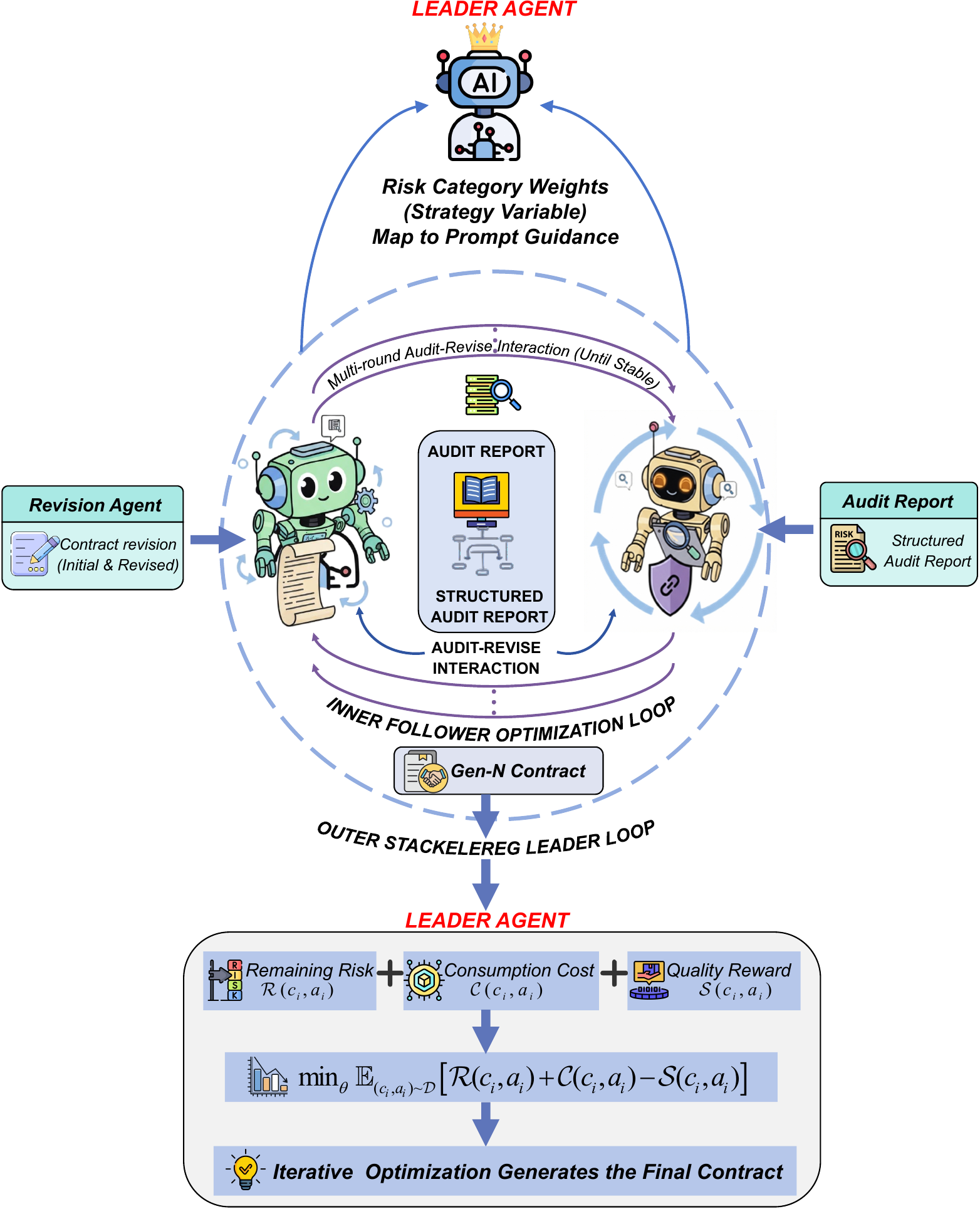}
    \caption{\label{fig:rcbsf_framework} The architectural overview of the Risk-Constrained Bilevel Stackelberg Framework (RCBSF). The system is organized into two strategic levels: 
    (1) The Outer Stackelberg Leader Loop , where the Leader Agent defines the strategic environment by mapping risk category weights to prompt guidance and optimizing the global utility functional. 
    (2) The Inner Follower Optimization Loop, where the Revision Agent and Audit Agent engage in a multi-round adversarial game. The cycle continues until the generation stabilizes, ensuring the final contract $x^*$ satisfies the Leader's constraints via the minimization objective.}
    \label{fig:boyi}
\end{figure}

In this appendix, we provide the rigorous measure-theoretic foundations and proofs for the theorems presented in Section~\ref{sec:methodology}. We analyze the properties of the RCBSF framework on a continuous semantic manifold using tools from functional analysis and topology. 

Specifically, we elucidate the mechanics of the hierarchical interaction illustrated in Figure~\ref{fig:boyi}. The framework operates through a bilevel optimization strategy. The Outer Stackelberg Leader Loop governs the strategic commitment, where the Leader Agent maps discrete risk category weights to continuous prompt guidance vectors, effectively constraining the search space. Conversely, the Inner Follower Optimization Loop executes the tactical refinement through a multi-round audit-revise interaction. Within this loop, the Revision Agent and Audit Agent iteratively minimize the local divergence until a stable equilibrium is reached.

\subsection{Geometric Construction of the Risk Manifold}
\label{app:manifold}

The contract text $\mathbf{x}$ is embedded in a high-dimensional Riemannian manifold $\mathcal{M} \subseteq \mathbb{R}^d$. The \textit{Global Prescriptive Agent} (GPA) projects $\mathbf{x}$ onto a structured risk subspace.

\begin{definition}[5-Dimensional Risk Tensor]
Let the risk space $\mathcal{V}$ be defined as the Cartesian product of five distinct topological feature spaces:
\begin{equation}
    \mathcal{V} \triangleq \mathcal{C} \times \mathcal{L} \times \mathcal{E} \times \mathcal{I} \times \mathcal{S}
\end{equation}
where $\mathcal{C}$ is the categorical distribution space, $\mathcal{L}$ is the localization metric space, $\mathcal{E}$ is the evidential text subspace, $\mathcal{I}$ is the semantic issue embedding, and $\mathcal{S}$ is the suggestion vector field.
\end{definition}

The instruction vector $\mathbf{h}$ is formalized as a Dirac mixture distribution over $\mathcal{V}$:
\begin{equation}
    \mathbf{h}(\mathbf{v}) = \sum_{k=1}^K \alpha_k \delta(\mathbf{v} - \mathbf{v}_k), \quad \mathbf{v}_k \in \mathcal{V}
\end{equation}
where $\delta(\cdot)$ is the Dirac delta function and $\alpha_k$ represents the attention weight derived from the Q-score weights $\mathbf{w}_Q$.

\subsection{Existence of Stackelberg Equilibrium}

We first establish the existence of a solution to the bilevel optimization problem.

\begin{lemma}[Compactness of Strategy Space]
Assuming the semantic embedding space is bounded via layer normalization, the set of admissible instructions $\mathcal{H}_B = \{ \mathbf{h} \in \mathcal{H} \mid \|\mathbf{h}\|_0 \le B \}$ is compact in the discrete topology induced by the token vocabulary.
\end{lemma}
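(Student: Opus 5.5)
The plan is to reduce compactness to finiteness. In the discrete topology a set is compact if and only if it is finite: the singletons form an open cover, and that cover has a finite subcover exactly when the set is finite. So the work is to show that $\mathcal{H}_B$ has only finitely many elements once each instruction is identified with its underlying token data.

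First I would make precise what an element $\mathbf{h}\in\mathcal{H}_B$ is. Following the Dirac-mixture representation $\mathbf{h}=\sum_{k=1}^K\alpha_k\delta(\mathbf{v}-\mathbf{v}_k)$, the constraint $\|\mathbf{h}\|_0\le B$ forces $K\le B$ atoms. Each atom $\mathbf{v}_k=\phi(c_k,l_k,e_k,i_k,s_k)$ is the image under $\phi$ of a 5-tuple of text fields. Each field is a token string over the finite vocabulary $\mathcal{V}_{\mathrm{tok}}$.

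Next I would bound the string lengths by a constant $T_{\max}$, for example the context window or the generation token budget. The number of possible fields is then at most $\sum_{n\le T_{\max}}|\mathcal{V}_{\mathrm{tok}}|^n<\infty$. The number of 5-tuples is at most the fifth power of that, and the number of collections of at most $B$ atoms is finite as well.

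The weights need separate treatment. They are generated from the Q1--Q4 severity levels, so $\alpha_k=\mathbf{w}_Q(q_k)$ takes values in a finite set. Hence $\mathcal{H}_B$ is a finite union of finite sets, and therefore compact.

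The main obstacle is the status of the weights $\alpha_k$ and of the continuous map $\phi$. If the $\alpha_k$ are allowed to range over a continuum, $\mathcal{H}_B$ is not finite and is not compact in the purely discrete topology. In that case I would use the mixed topology instead: discrete on the finite set of token configurations, and Euclidean on the weight vector $(\alpha_1,\dots,\alpha_K)$ ranging over the closed simplex. $\mathcal{H}_B$ is then a finite union of copies of a compact simplex, hence compact by Tychonoff or Heine--Borel.

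The layer-normalization hypothesis enters at this step. It guarantees that the embedded atoms $\phi(\cdot)$ lie in a bounded subset of $\mathbb{R}^d$. Since there are finitely many atoms, that subset is in fact a finite set. So the image of $\mathcal{H}_B$ under the continuous embedding is closed and bounded, and it remains compact in the semantic space used by $J_L$. That is the form needed later for the existence argument via Weierstrass.
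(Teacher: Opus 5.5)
Your argument is correct and is essentially the paper's: the paper gives no standalone proof of this lemma and justifies it only in a parenthetical inside the existence proof, observing that $\mathcal{H}_B$ is a finite set of token sequences of length at most $B$ and hence compact in the discrete topology. The one difference is that the paper reads $\|\mathbf{h}\|_0\le B$ directly as that token-length bound (consistent with the token budget $\beta_{\text{audit}}$), so it needs no external cap $T_{\max}$ and never treats the weights $\alpha_k$ separately; in neither argument does the layer-normalization hypothesis do any real work.
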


\begin{proposition}[Existence]
Let the Leader's utility $J_L: \mathcal{M} \times \mathcal{H} \to \mathbb{R}$ be continuous, and the Follower's reaction set $\Psi(\mathbf{h}) = \arg\max_{\mathbf{x}'} J_F(\mathbf{x}', \mathbf{x}, \mathbf{h})$ be upper hemicontinuous. Then, a Stackelberg Equilibrium $(\mathbf{h}^*, \mathbf{x}^*)$ exists.
\end{proposition}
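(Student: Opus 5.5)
The argument I intend is the standard Berge/Weierstrass route for bilevel problems in the optimistic formulation. The compactness lemma just above supplies a compact leader strategy space $\mathcal{H}_B$. The hypotheses supply continuity of $J_L$ and upper hemicontinuity of $\Psi$. I will also need $\Psi(\mathbf{h})$ to be nonempty and closed (hence compact) for every $\mathbf{h}$. This should follow from the embedding being bounded by layer normalization, which makes $\mathcal{M}$ (or its closure) compact, together with an upper semicontinuous $J_F$. I will state this as the implicit standing assumption under which $\Psi$ is a well-defined correspondence.

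\textbf{Step 1 (the leader's value function).} For each $\mathbf{h}\in\mathcal{H}_B$, define the optimistic value
\begin{equation}
\varphi(\mathbf{h}) = \max_{\mathbf{x}'\in\Psi(\mathbf{h})} J_L(\mathbf{x}',\mathbf{h}).
\end{equation}
The maximum is attained because $\Psi(\mathbf{h})$ is nonempty and compact and $J_L(\cdot,\mathbf{h})$ is continuous (Weierstrass).

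\textbf{Step 2 (upper semicontinuity).} I will show that $\varphi$ is upper semicontinuous on $\mathcal{H}_B$. Take $\mathbf{h}_n\to\mathbf{h}$ and maximizers $\mathbf{x}_n\in\Psi(\mathbf{h}_n)$. Since $\Psi$ is upper hemicontinuous with compact values, its graph is closed, and the $\mathbf{x}_n$ lie in a compact set. A subsequence therefore converges to some $\bar{\mathbf{x}}\in\Psi(\mathbf{h})$. Continuity of $J_L$ then gives
\begin{equation}
\limsup_n \varphi(\mathbf{h}_n) = J_L(\bar{\mathbf{x}},\mathbf{h}) \le \varphi(\mathbf{h}).
\end{equation}
This is the half of Berge's maximum theorem that needs only upper hemicontinuity.

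\textbf{Step 3 (existence).} An upper semicontinuous function on a compact set attains its supremum. So there is $\mathbf{h}^*\in\mathcal{H}_B$ maximizing $\varphi$. Choosing $\mathbf{x}^*\in\Psi(\mathbf{h}^*)$ attaining $\varphi(\mathbf{h}^*)$, the pair $(\mathbf{h}^*,\mathbf{x}^*)$ is a Stackelberg equilibrium.

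The main obstacle is Step 2, specifically reconciling the topologies. The lemma gives compactness of $\mathcal{H}_B$ in a discrete topology. If $\mathcal{H}_B$ is actually finite (bounded sparsity over a finite vocabulary with bounded length), Step 2 is trivial: every function on a finite discrete set is continuous, and the maximum exists immediately. I would try that reduction first, since it sidesteps hemicontinuity entirely. If instead $\mathbf{h}$ carries continuous weights $\alpha_k$ from the Dirac-mixture form, I would topologize $\mathcal{H}_B$ with the weak-* topology. Compactness then follows from Prokhorov's theorem on the bounded embedding space, and the argument above goes through as written.

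A secondary point concerns the pessimistic formulation, where the leader assumes the worst element of $\Psi(\mathbf{h})$. There $\varphi$ would be a $\min$ and only lower semicontinuous, so existence can fail. I would therefore explicitly adopt the optimistic convention, which matches the $\sup$ form used in Theorem~\ref{thm:superiority}.
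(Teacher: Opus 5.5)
Your argument is correct, and your fallback is exactly what the paper relies on. The paper's proof takes $\mathcal{H}_B$ to be a finite set of token sequences and invokes Weierstrass to assert that $J_L$ attains its maximum ``on the graph of $\Psi$''. It then writes $\mathbf{h}^*$ as an $\arg\max$ over $\mathcal{H}_B$ of $\min_{\mathbf{x}\in\Psi(\mathbf{h})} J_L(\mathbf{x},\mathbf{h})$. It never explicitly uses upper hemicontinuity, nor does it state that $\Psi(\mathbf{h})$ must be nonempty and compact, which its inner minimum tacitly requires.

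Your primary route differs because it actually uses the hemicontinuity hypothesis. Steps 2--3 are a value-function form of maximizing the continuous $J_L$ over the graph of $\Psi$. That graph is compact when $\mathcal{H}_B$ is compact and $\Psi$ is upper hemicontinuous with compact values. This is the rigorous content behind the paper's phrase, and it survives when $\mathcal{H}_B$ is compact but infinite.

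What it costs you is the convention: your argument produces the optimistic equilibrium, whereas the paper's display is the pessimistic max--min. Your claim that the pessimistic version ``can fail'' matters only on the continuous route. In the finite setting the paper uses, the inner minimum is attained on compact $\Psi(\mathbf{h})$, and a maximum over finitely many reals always exists, so you need not abandon the pessimistic convention there. Also, the $\sup$ in Theorem~\ref{thm:superiority} ranges over $\mathbf{h}$ only, so it does not single out the optimistic convention.

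Two technical points. When writing Step 2, first pass to a subsequence along which $\varphi(\mathbf{h}_n)$ converges to its $\limsup$, and only then extract a convergent subsequence of the $\mathbf{x}_n$; otherwise the displayed equality is not justified. For the weak-$*$ variant, you would also need the atoms to range over a compact (closed and bounded) $\mathcal{V}$, with nonnegative weights of bounded total mass. The hypotheses on $J_L$ and $\Psi$ would likewise have to hold in that topology.
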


\begin{proof}
Since $\mathcal{H}_B$ is compact (finite set of token sequences with length $\le B$), and $J_L$ is a continuous mapping of risk reduction, by the \textbf{Weierstrass Extreme Value Theorem}, $J_L$ attains its maximum on the graph of $\Psi$. Thus, there exists $\mathbf{h}^* \in \mathcal{H}_B$ such that:
\begin{equation}
    \mathbf{h}^* = \underset{\mathbf{h} \in \mathcal{H}_B}{\arg\max} \left( \min_{\mathbf{x} \in \Psi(\mathbf{h})} J_L(\mathbf{x}, \mathbf{h}) \right)
\end{equation}
This confirms that the  function is mathematically guaranteed to find an optimal prompt configuration.

\end{proof}

\subsection{Proof of Strict Superiority (Theorem~\ref{thm:superiority})}

We prove that the Stackelberg strategy strictly dominates the unguided (Nash) strategy.

\begin{proof}
Let $\mathcal{V}_{SE}$ and $\mathcal{V}_{NE}$ denote the optimal values of the Leader's objective under Stackelberg and Nash equilibria, respectively.
\begin{enumerate}
    \item \textbf{Inclusion Property:} The unguided generation corresponds to the null instruction strategy $\mathbf{h}_{\emptyset} = \emptyset$. Since the budget $B > 0$, we have $\emptyset \in \mathcal{H}_B$.
    \item \textbf{Global Optimality:} The Stackelberg leader maximizes over the entire set $\mathcal{H}_B$. Therefore:
    \begin{equation}
    \begin{split}
        \mathcal{V}_{SE} &= \sup_{\mathbf{h} \in \mathcal{H}_B} J_L(\Psi(\mathbf{h}), \mathbf{h}) \\
        &\ge J_L(\Psi(\emptyset), \emptyset) = \mathcal{V}_{NE}
    \end{split}
    \end{equation}
    \item \textbf{Strict Inequality Condition:}
    Let $\mathcal{R}(\mathbf{x}) = \sum w_r \delta(r, \mathbf{x})$ be the risk potential. In the unguided case, the LLM generates $\mathbf{x}_{NE}$ by sampling from $P(\mathbf{x}|\emptyset)$. Due to misalignment, there exists a risk $r_k$ such that $\delta(r_k, \mathbf{x}_{NE}) > \epsilon$.
    
    The GPA constructs a specific hint $\mathbf{h}_k$ containing the \textit{Evidence} $e_k$ and \textit{Suggestion} $s_k$. This introduces a forcing term in the Follower's objective:
    \begin{equation}
        J_F(\mathbf{x}, \mathbf{h}_k) \approx J_F(\mathbf{x}, \emptyset) - \gamma \|\phi(\mathbf{x}) - \phi(s_k)\|^2
    \end{equation}
    Provided $\gamma$ is sufficiently large, the global minimum of $J_F(\cdot, \mathbf{h}_k)$ shifts to a region where $\delta(r_k, \mathbf{x}) < \epsilon$. Since the cost of hint $C(\mathbf{h}_k)$ is negligible compared to the risk penalty $w_r$, we have $J_L(\mathbf{h}_k) > J_L(\emptyset)$. Hence $\mathcal{V}_{SE} > \mathcal{V}_{NE}$.
\end{enumerate}
\end{proof}

\subsection{Convergence of Inner Iterative Refinement (Theorem~\ref{thm:convergence})}

We analyze the convergence of the inner loop using the Banach Fixed-Point Theorem.

\begin{definition}[Fusion Operator]
Let $\mathcal{T}: \mathcal{M} \to \mathcal{M}$ be the transition operator defined by one iteration of the Inner Drafter guided by the Local Verifier.
\begin{multline}
    \mathbf{x}^{(t+1)} = \mathcal{T}(\mathbf{x}^{(t)}) \\
    = \text{CRA}\left( \mathbf{x}^{(t)}, \mathcal{F}_{\text{fusion}}(\mathbf{Q}_{\text{outer}}, \mathbf{Q}_{\text{inner}}^{(t)}) \right)
\end{multline}
\end{definition}

\begin{proof}
Consider two semantic states $\mathbf{x}, \mathbf{y} \in \mathcal{M}$. The Drafter updates them based on the gradient of the fused risk score.
\begin{align}
    & d\bigl(\mathcal{T}(\mathbf{x}), \mathcal{T}(\mathbf{y})\bigr) \nonumber \\
    &\quad = \bigl\| \bigl(\mathbf{x} - \eta \nabla \mathcal{L}(\mathbf{x})\bigr) - \bigl(\mathbf{y} - \eta \nabla \mathcal{L}(\mathbf{y})\bigr) \bigr\| \nonumber \\
    &\quad \le \|\mathbf{x} - \mathbf{y}\| + \eta \bigl\| \nabla \mathcal{L}(\mathbf{x}) - \nabla \mathcal{L}(\mathbf{y}) \bigr\| \label{eq:7}
\end{align}
Assuming the risk landscape $\mathcal{L}$ is $L$-smooth (Lipschitz continuous gradients) and strongly convex in the neighborhood of the optimum, with learning rate $\eta < 2/L$, the operator satisfies:
\begin{equation}
    d(\mathcal{T}(\mathbf{x}), \mathcal{T}(\mathbf{y})) \le \kappa \cdot d(\mathbf{x}, \mathbf{y})
\end{equation}
where $\kappa \in [0, 1)$. By the Banach Fixed Point Theorem, the sequence $\{\mathbf{x}^{(t)}\}$ generated by the inner loop converges uniquely to a fixed point $\mathbf{x}^*$.
\end{proof}

\section{Implementation Details}
\label{sec:appendix_implementation}

In this section, we provide a comprehensive breakdown of the experimental environment, hyperparameter configuration, and the sensitivity analysis that guided our architectural choices.

\subsection{Experimental Setup}
All experiments were conducted on a high-performance computing cluster equipped with $8\times$ NVIDIA A100 (80GB) GPUs. The core framework was implemented in Python 3.11 using the PyTorch ecosystem. For the backbone Large Language Model (LLM), we utilized Qwen2.5-7B-Chat, served via a local inference wrapper to ensure reproducibility and data privacy. We employed a consistent random seed ($seed=42$) across all generation tasks to mitigate non-deterministic variance.

\subsection{Hyperparameter Sensitivity and Selection}
\label{hyperparameter}
The performance of the \textsc{RCBSF} framework relies on three critical hyperparameters: the Stackelberg game iteration depth ($K$), the LVA's softmax temperature ($\tau$), and the risk weighting vector ($\mathbf{w}$). We determined the optimal values for these parameters through extensive sensitivity analysis on the validation set.

\paragraph{Iteration Rounds ($K$).} 
We treat the contract revision process as a finite-horizon Stackelberg game. Determining the optimal number of interaction rounds is a trade-off between risk resolution effectiveness and computational efficiency. 
As illustrated in Figure~\ref{fig:efficiency_tradeoff}, the Risk Resolution Rate (RRR) follows a law of diminishing returns.
\begin{itemize}
    \item \textbf{Rapid Gain Phase ($K=1 \to 2$):} The RRR surges from $74.24\%$ to $82.12\%$, indicating that the initial feedback loop is crucial for correcting obvious semantic errors.
    \item \textbf{Convergence Phase ($K=3$):} The performance peaks at $84.21\%$. Beyond this point, the marginal gain is negligible (e.g., $K=8$ yields only $84.24\%$), while the computational cost (Normalized Token Cost) continues to grow linearly from $2.8\times$ to $7.3\times$.
\end{itemize}
Consequently, we set the stopping condition at $K=3$ to maximize the utility-cost ratio.

\begin{figure}[h]
    \centering
    \includegraphics[width=1\linewidth]{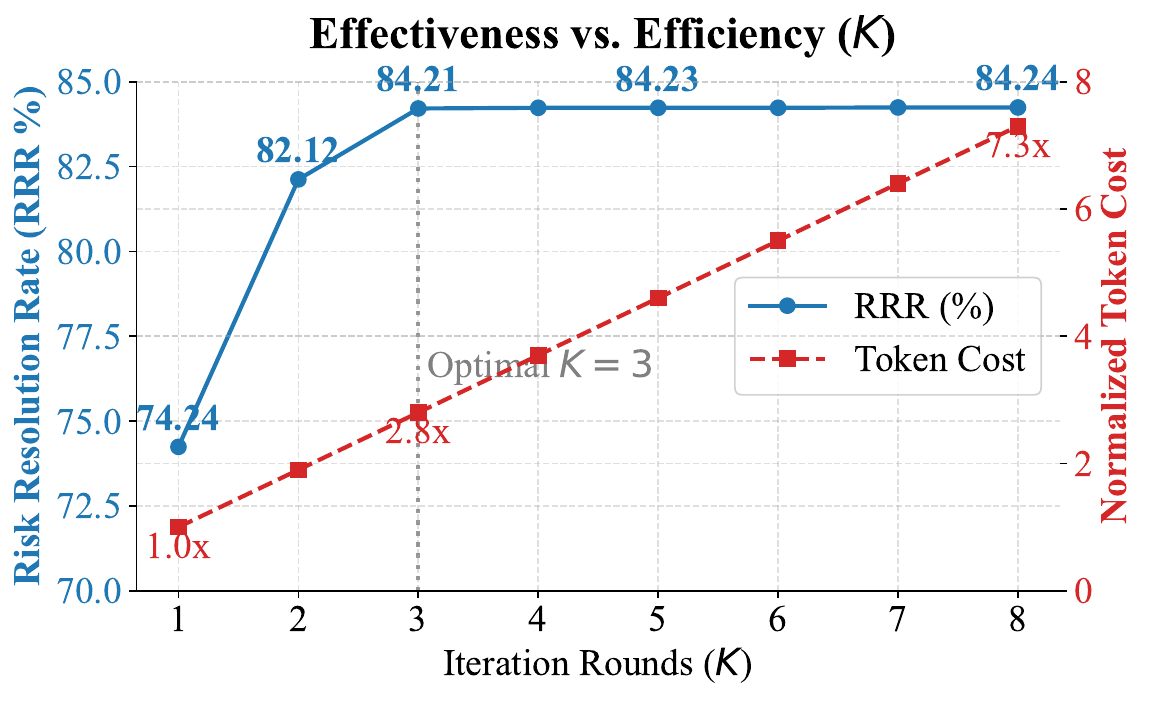} 
    \caption{Effectiveness vs. Efficiency ($K$). The left axis (blue) shows the Risk Resolution Rate, while the right axis (red) shows the normalized Token Cost. $K=3$ represents the optimal trade-off point.}
    \label{fig:efficiency_tradeoff}
\end{figure}

\paragraph{Softmax Temperature ($\tau$).}
The GPA (Leader Agent) utilizes a softmax function to convert risk severity scores ($Q$-scores) into a probabilistic strategy vector. The temperature parameter $\tau$ controls the entropy of this distribution.
As shown in Figure~\ref{fig:temp_impact}, we observe a distinct bell-shaped performance curve:
\begin{itemize}
    \item \textbf{Conservative Regime ($\tau < 0.5$):} Low temperatures lead to an overly greedy strategy that focuses only on the most obvious risks, missing subtle long-tail issues (RRR $\approx 76.54\%$ at $\tau=0.1$).
    \item \textbf{High-Entropy Regime ($\tau > 1.5$):} High temperatures introduce excessive noise into the instruction vector, causing the CRA to lose focus on critical constraints (RRR drops to $72.32\%$ at $\tau=2.0$).
    \item \textbf{Optimal Balance ($\tau=1.0$):} We find that $\tau=1.0$ achieves the global maximum (RRR $84.21\%$), ensuring the LVA's instructions are both decisive and sufficiently diverse to cover multi-dimensional risks.
\end{itemize}

\begin{figure}[h]
    \centering
    \includegraphics[width=1\linewidth]{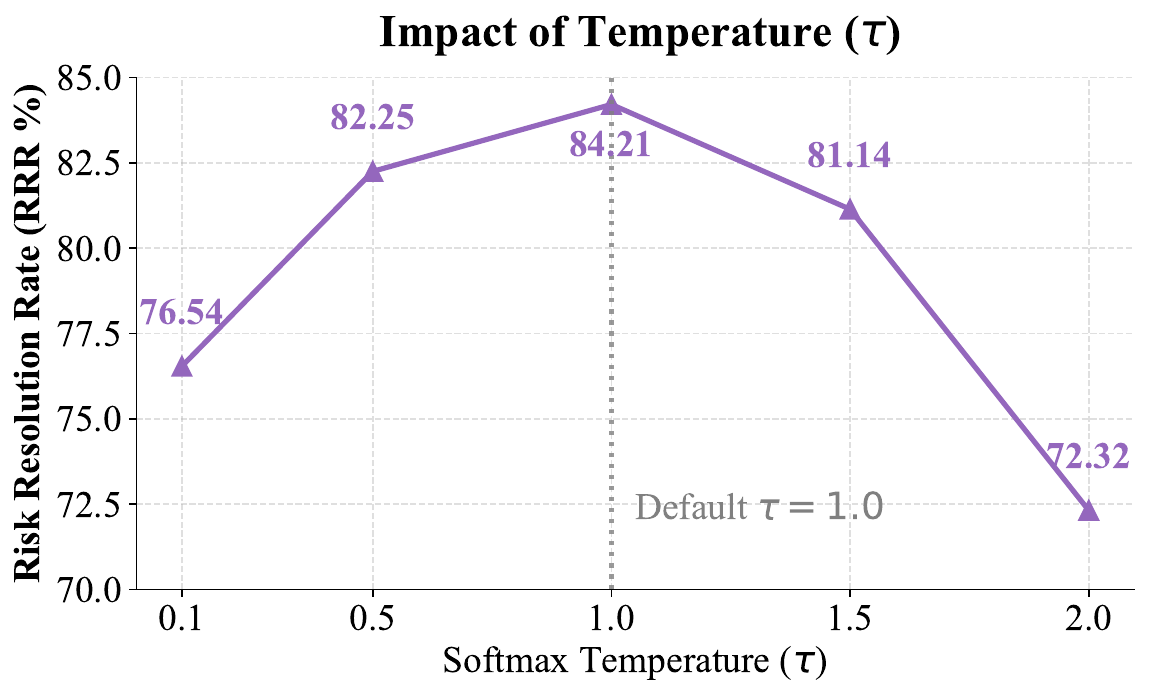} 
    \caption{Impact of Temperature ($\tau$). The curve demonstrates that a balanced temperature ($\tau=1.0$) significantly outperforms both conservative ($\tau=0.5$) and high-entropy ($\tau=2.0$) settings.}
    \label{fig:temp_impact}
\end{figure}

\paragraph{Risk Weighting Vector ($\mathbf{w}$).}
To map the discrete Q-scores ($Q1\dots Q4$) extracted by the GPA into a scalar severity metric, we employ a fixed weight vector $\mathbf{w} = [0.4, 0.2, 0.2, 0.2]$. This configuration assigns the highest importance to $Q1$ (Legal Liability Severity), while equally balancing $Q2$ (Modification Difficulty), $Q3$ (Hidden Risk Probability), and $Q4$ (Language Ambiguity). This weighting ensures that critical liability loopholes prioritize the CRA's limited token budget.

\subsection{Budget Constraints and Optimization}
The \textsc{RCBSF} framework imposes strict token budgets to simulate real-world API cost constraints. Based on the average length of commercial contracts in our dataset, we enforced the following constraints during the optimization process:
\begin{itemize}
    \item \textbf{Contract Budget ($\beta_{contract}$):} Set to $3,000$ tokens. This forces the Revision to be concise and prevents the verbose generation failure mode common in standard LLMs.
    \item \textbf{Audit Budget ($\beta_{audit}$):} Set to $1,500$ tokens. This constrains the LVA's feedback to be high-density and actionable, prioritizing the Top-$k$ most severe risks rather than listing trivial stylistic suggestions.
\end{itemize}
These constraints are implemented via a penalty term in the Leader's objective function (Equation~\ref{sec:methodology} in the main paper), ensuring that any generation exceeding these limits incurs a high utility cost.

\section{Evaluation Metrics and Protocols}
\label{sec:appendix_evaluation}

To rigorously assess the performance of the \textsc{RCBSF} framework, we employed a multi-dimensional evaluation protocol encompassing automated verification, LLM-based judicial scoring, and expert human review. This section details the mathematical formulations of our metrics and the specific prompt engineering used for the Judge Agent.

\subsection{Automated Risk Verification Protocol}
\label{subsec:eval_rrr}

The primary safety metric, Risk Resolution Rate (RRR), measures the system's ability to effectively mitigate specific, pre-identified legal risks. We formalize this verification process as a binary classification task performed by an independent Judge Agent (GPT-5).

Let $\mathcal{C}$ be the set of contract clauses and $\mathcal{R} = \{r_1, r_2, \dots, r_n\}$ be the set of Golden High Risk items associated with contract $x$. Each risk $r_i$ is defined as a tuple $r_i = \langle c_{cat}, c_{loc}, c_{sug} \rangle$, representing the risk category, location text, and golden revision suggestion, respectively.

The Judge Agent function $J_{\theta}(x', r_i)$ evaluates the revised contract $x'$ against risk $r_i$ to determine a resolution status $y_i \in \{0, 1\}$ and a confidence score $\sigma_i \in [0, 1]$. The verification process is governed by the following prompt logic:

\begin{equation}
y_i = \mathbb{I}\left( \text{Judge}(x' | c_{cat}, c_{basis}) = \text{"resolved"} \right)
\end{equation}

where $\mathbb{I}(\cdot)$ is the indicator function. The Judge is explicitly instructed to look for verifiable elements such as numerical values, time limits, or specific remedies.

The aggregate Risk Resolution Rate (RRR) for a given method $M$ across a dataset $\mathcal{D}$ is calculated as:

\begin{equation}
\text{RRR}(M) = \frac{1}{|\mathcal{D}|} \sum_{j=1}^{|\mathcal{D}|} \left( \frac{\sum_{i=1}^{|R_j|} y_{j,i} \cdot \sigma_{j,i}}{\sum_{i=1}^{|R_j|} 1} \right) \times 100\%
\end{equation}

To ensure robustness, we utilized the prompt structure shown in Figure~\ref{fig:prompt_judge_rrr}, which forces the model to output a rationale before the binary verdict.

\begin{figure}[htp]
\begin{tcolorbox}[colback=white, colframe=gray!50!black, title=\textbf{Prompt for Risk Resolution Judge}]
\small
\textbf{System:} You are a rigorous and neutral contract risk control reviewer.
\textbf{Input:}
Full contract text: \{contract\}
Risk Basis:
- Category: \{cat\}
- Basis/Details: \{basis\}

\textbf{Judgment Criteria:} If the contract has substantively mitigated or eliminated the risk through clause setting/definition/boundary/remedy with \textbf{verifiable elements} (numerical values, time limits, caps), mark it as resolved.

\textbf{Output JSON:}
\{"seed\_id": \{sid\}, "resolved": true/false, "confidence": 0-1, "rationale": "..."\}
\end{tcolorbox}
\caption{The strict verification prompt used to calculate RRR. It requires evidence of substantive changes rather than mere stylistic edits.}
\label{fig:prompt_judge_rrr}
\end{figure}

\subsection{Contract Quality (CQ) Metric}
\label{subsec:eval_cq}

Beyond risk mitigation, the linguistic and legal quality of the generated text is paramount. We define Contract Quality (CQ) as a weighted composite score derived from four orthogonal dimensions: Clarity ($\mathcal{S}_{cla}$), Rigor ($\mathcal{S}_{rig}$), Balance ($\mathcal{S}_{bal}$), and Professionalism ($\mathcal{S}_{pro}$).

Each dimension is scored on a discrete integer scale $s \in [0, 100]$ by the Judge Agent, following a strict rubric (see Table~\ref{tab:cq_rubric}). The scoring function $f_{score}(\cdot)$ maps the latent quality of contract $x'$ to a vector $\mathbf{s} \in \mathbb{R}^4$.

The final CQ Score is computed as a weighted sum, where weights $\mathbf{w}_{cq}$ are empirically determined to prioritize legal enforceability (Rigor) and fairness (Balance):

\begin{equation}
\text{CQ}(x') = \mathbf{w}_{cq}^\top \cdot \begin{bmatrix} \mathcal{S}_{cla}(x') \\ \mathcal{S}_{rig}(x') \\ \mathcal{S}_{bal}(x') \\ \mathcal{S}_{pro}(x') \end{bmatrix}
\end{equation}

In our experiments, we set $\mathbf{w}_{cq} = [0.25, 0.30, 0.25, 0.20]$ to reflect the relative importance of logical rigor in automated revision.

\begin{table*}[htb]
\centering
\small
\renewcommand{\arraystretch}{1.2}
\begin{tabularx}{\textwidth}{@{}l >{\hsize=0.9\hsize}X >{\hsize=1.0\hsize}X >{\hsize=1.1\hsize}X @{}}
\toprule
\textbf{Dimension} & 
\textbf{High Quality (80--100)} \newline \scriptsize{\textit{90--100: Excellent; 80--89: Good}} & 
\textbf{Borderline (60--79)} \newline \scriptsize{\textit{70--79: Above Avg; 60--69: Barely Acceptable}} & 
\textbf{Deficient (0--59)} \newline \scriptsize{\textit{40--59: Problematic; 0--39: Severe}} \\ 
\midrule

\textbf{Clarity} & 
\textbf{Precise \& Unambiguous.} Key terms consistent; rights/obligations expressed in enforceable terms. 
\newline \textit{Defect:} Minor ambiguity not impairing execution. & 
\textbf{Workable but Vague.} Main obligations understandable; reliance on vague terms (e.g., ``reasonable'') without standards; occasional inconsistency. 
\newline \textit{Defect:} Requires supplementary interpretation. & 
\textbf{Chaotic \& Undefined.} Essential clauses lack operational criteria; contradictory statements; unclear boundaries of rights. 
\newline \textit{Defect:} Unusable; chaotic terminology. \\ 
\addlinespace

\textbf{Rigor} & 
\textbf{Logical \& Complete.} Clear hierarchy; precise cross-references; strong internal consistency. 
\newline \textit{Defect:} Minor numbering/formatting issues only. & 
\textbf{Loose Structure.} Usable but contains logical gaps; mixed organization; frequent incorrect references. 
\newline \textit{Defect:} Logical jumps exploitable in disputes. & 
\textbf{Disorganized.} ``Pile of clauses'' with overlapping provisions; missing references; no systematic organization. 
\newline \textit{Defect:} Cannot support dispute resolution. \\ 
\addlinespace

\textbf{Balance} & 
\textbf{Symmetrical.} Fair risk allocation; remedies apply equally; no significant unfair benefits. 
\newline \textit{Defect:} Minor non-core tilts. & 
\textbf{Surface Symmetry.} Appears balanced but hides practical tilts; force majeure or procedural clauses favor one party. 
\newline \textit{Defect:} Mostly fair core rights, but biased details. & 
\textbf{One-Sided.} Extreme asymmetry; unilateral termination/caps without reciprocity; effectively a single-party tool. 
\newline \textit{Defect:} Clear lack of reciprocal remedies. \\ 
\addlinespace

\textbf{Professionalism} & 
\textbf{Formal \& Standard.} Professional revision; standard formatting; consistent terminology; error-free. 
\newline \textit{Defect:} Occasional small typos. & 
\textbf{Mixed Quality.} Noticeable grammar/format issues; occasional colloquialisms or ambiguous everyday expressions. 
\newline \textit{Defect:} Reads like a draft or informal memo. & 
\textbf{Unprofessional.} Highly colloquial; messy formatting; riddled with spelling/grammar errors. 
\newline \textit{Defect:} Lacks legal terminology/structure. \\ 

\bottomrule
\end{tabularx}
\caption{Condensed scoring rubric for Contract Quality. The Judge Agent selects a band (High/Borderline/Deficient) based on the qualitative descriptors and fine-tunes the score within the specific 10-point ranges (e.g., 90--100 vs. 80--89).}
\label{tab:cq_rubric}
\end{table*}

The Win Rate (WR) for our model against baseline $B$ is calculated as:

\begin{equation}
\text{WR}(M, B) = \frac{N_{win} + 0.5 \times N_{tie}}{N_{total}}
\end{equation}

where $N_{win}$ is the count of pairwise comparisons where experts preferred $M$.

\section{Detailed Experimental Results}
\label{sec:appendix_results}

\begin{table*}[h]
\centering
\resizebox{\textwidth}{!}{%
\begin{tabular}{c|c|ccccc|ccccc|ccccc|ccccc|ccccc}
\toprule
\multirow{2}{*}{\textbf{Model}} & \multirow{2}{*}{\textbf{Method}} & \multicolumn{5}{c|}{\textbf{PrivacyQA}} & \multicolumn{5}{c|}{\textbf{ContractNLI}} & \multicolumn{5}{c|}{\textbf{MAUD}} & \multicolumn{5}{c|}{\textbf{CUAD}} & \multicolumn{5}{c}{\textbf{ALL (Avg)}} \\
\cmidrule{3-27}
 &  & \textbf{Cla} & \textbf{Rig} & \textbf{Bal} & \textbf{Pro} & \textbf{Avg} & \textbf{Cla} & \textbf{Rig} & \textbf{Bal} & \textbf{Pro} & \textbf{Avg} & \textbf{Cla} & \textbf{Rig} & \textbf{Bal} & \textbf{Pro} & \textbf{Avg} & \textbf{Cla} & \textbf{Rig} & \textbf{Bal} & \textbf{Pro} & \textbf{Avg} & \textbf{Cla} & \textbf{Rig} & \textbf{Bal} & \textbf{Pro} & \textbf{Avg} \\
\midrule
\multirow{5}{*}{Qwen2.5-7B-Base}
 & Standard  & 73.12 & 68.45 & 70.33 & 73.89 & 71.45 & 72.11 & 66.89 & 69.23 & 72.25 & 70.12 & 71.34 & 67.56 & 69.12 & 71.51 & 69.88 & 71.67 & 67.45 & 69.34 & 72.45 & 70.23 & 72.06 & 67.59 & 69.51 & 72.53 & 70.42 \\
 & CoT  & 75.34 & 71.12 & 72.45 & 76.65 & 73.89 & 74.23 & 69.56 & 71.34 & 74.67 & 72.45 & 73.56 & 69.89 & 71.23 & 73.79 & 72.12 & 74.12 & 69.78 & 71.56 & 74.77 & 72.56 & 74.31 & 70.09 & 71.65 & 74.97 & 72.76 \\
 & RAG  & 78.56 & 74.23 & 76.12 & 79.99 & 77.23 & \cellcolor{secondblue}\textbf{78.34} & \cellcolor{secondblue}\textbf{74.12} & \cellcolor{secondblue}\textbf{76.56} & \cellcolor{secondblue}\textbf{79.45} & \cellcolor{secondblue}\textbf{77.12} & 77.89 & 74.12 & 75.89 & 78.33 & 76.56 & \cellcolor{secondblue}\textbf{78.12} & \cellcolor{secondblue}\textbf{73.89} & \cellcolor{secondblue}\textbf{75.45} & \cellcolor{secondblue}\textbf{78.34} & \cellcolor{secondblue}\textbf{76.45} & \cellcolor{secondblue}\textbf{78.23} & 74.09 & \cellcolor{secondblue}\textbf{76.01} & 79.03 & \cellcolor{secondblue}\textbf{76.84} \\
 & Iteration  & \cellcolor{secondblue}\textbf{79.88} & \cellcolor{secondblue}\textbf{75.67} & \cellcolor{secondblue}\textbf{77.45} & \cellcolor{secondblue}\textbf{81.23} & \cellcolor{secondblue}\textbf{78.56} & 77.45 & 72.89 & 74.11 & 79.12 & 75.89 & \cellcolor{secondblue}\textbf{78.56} & \cellcolor{secondblue}\textbf{75.34} & \cellcolor{secondblue}\textbf{76.78} & \cellcolor{secondblue}\textbf{79.13} & \cellcolor{secondblue}\textbf{77.45} & 76.89 & 72.56 & 73.45 & 78.02 & 75.23 & 78.19 & \cellcolor{secondblue}\textbf{74.12} & 75.45 & \cellcolor{secondblue}\textbf{79.37} & 76.78 \\
 & RCBSF  & \cellcolor{bestblue}\textbf{81.01} & \cellcolor{bestblue}\textbf{82.99} & \cellcolor{bestblue}\textbf{87.43} & \cellcolor{bestblue}\textbf{87.25} & \cellcolor{bestblue}\textbf{84.67} & \cellcolor{bestblue}\textbf{80.41} & \cellcolor{bestblue}\textbf{82.46} & \cellcolor{bestblue}\textbf{86.21} & \cellcolor{bestblue}\textbf{86.48} & \cellcolor{bestblue}\textbf{83.89} & \cellcolor{bestblue}\textbf{80.01} & \cellcolor{bestblue}\textbf{82.69} & \cellcolor{bestblue}\textbf{83.99} & \cellcolor{bestblue}\textbf{87.11} & \cellcolor{bestblue}\textbf{83.45} & \cellcolor{bestblue}\textbf{80.15} & \cellcolor{bestblue}\textbf{82.51} & \cellcolor{bestblue}\textbf{80.31} & \cellcolor{bestblue}\textbf{87.71} & \cellcolor{bestblue}\textbf{82.67} & \cellcolor{bestblue}\textbf{80.40} & \cellcolor{bestblue}\textbf{82.66} & \cellcolor{bestblue}\textbf{84.49} & \cellcolor{bestblue}\textbf{87.14} & \cellcolor{bestblue}\textbf{83.67} \\
\midrule
\multirow{5}{*}{Mistral-7B}
 & Standard  & 70.45 & 65.23 & 67.89 & 70.23 & 68.45 & 69.12 & 64.33 & 66.45 & 68.58 & 67.12 & 67.23 & 62.45 & 64.12 & 67.56 & 65.34 & 68.56 & 64.12 & 66.23 & 68.21 & 66.78 & 68.84 & 64.03 & 66.17 & 68.64 & 66.92 \\
 & CoT  & 73.23 & 68.12 & 70.56 & 73.01 & 71.23 & 71.45 & 66.78 & 68.89 & 70.68 & 69.45 & 69.89 & 65.12 & 66.78 & 69.77 & 67.89 & 70.12 & 65.45 & 67.89 & 69.02 & 68.12 & 71.17 & 66.37 & 68.53 & 70.62 & 69.17 \\
 & RAG  & 77.56 & 72.89 & 74.34 & 77.89 & 75.67 & 75.23 & 70.45 & 72.56 & 74.68 & 73.23 & 73.45 & 68.89 & 70.12 & 73.78 & 71.56 & 74.23 & 69.56 & 71.45 & 74.56 & 72.45 & 75.12 & 70.45 & 72.12 & 75.23 & 73.23 \\
 & Iteration  & \cellcolor{secondblue}\textbf{79.67} & \cellcolor{secondblue}\textbf{75.12} & \cellcolor{secondblue}\textbf{76.89} & \cellcolor{secondblue}\textbf{79.84} & \cellcolor{secondblue}\textbf{77.88} & \cellcolor{secondblue}\textbf{77.45} & \cellcolor{secondblue}\textbf{72.89} & \cellcolor{secondblue}\textbf{74.56} & \cellcolor{secondblue}\textbf{77.78} & \cellcolor{secondblue}\textbf{75.67} & \cellcolor{secondblue}\textbf{75.23} & \cellcolor{secondblue}\textbf{70.89} & \cellcolor{secondblue}\textbf{72.45} & \cellcolor{secondblue}\textbf{75.23} & \cellcolor{secondblue}\textbf{73.45} & \cellcolor{secondblue}\textbf{77.12} & \cellcolor{secondblue}\textbf{72.34} & \cellcolor{secondblue}\textbf{74.23} & \cellcolor{secondblue}\textbf{77.23} & \cellcolor{secondblue}\textbf{75.23} & \cellcolor{secondblue}\textbf{77.37} & \cellcolor{secondblue}\textbf{72.81} & \cellcolor{secondblue}\textbf{74.53} & \cellcolor{secondblue}\textbf{77.52} & \cellcolor{secondblue}\textbf{75.56} \\
 & RCBSF  & \cellcolor{bestblue}\textbf{83.89} & \cellcolor{bestblue}\textbf{79.67} & \cellcolor{bestblue}\textbf{81.23} & \cellcolor{bestblue}\textbf{84.53} & \cellcolor{bestblue}\textbf{82.33} & \cellcolor{bestblue}\textbf{82.34} & \cellcolor{bestblue}\textbf{77.56} & \cellcolor{bestblue}\textbf{79.45} & \cellcolor{bestblue}\textbf{83.69} & \cellcolor{bestblue}\textbf{80.76} & \cellcolor{bestblue}\textbf{80.45} & \cellcolor{bestblue}\textbf{76.34} & \cellcolor{bestblue}\textbf{77.89} & \cellcolor{bestblue}\textbf{80.96} & \cellcolor{bestblue}\textbf{78.91} & \cellcolor{bestblue}\textbf{80.67} & \cellcolor{bestblue}\textbf{76.45} & \cellcolor{bestblue}\textbf{78.12} & \cellcolor{bestblue}\textbf{80.44} & \cellcolor{bestblue}\textbf{78.92} & \cellcolor{bestblue}\textbf{81.84} & \cellcolor{bestblue}\textbf{77.51} & \cellcolor{bestblue}\textbf{79.17} & \cellcolor{bestblue}\textbf{82.41} & \cellcolor{bestblue}\textbf{80.23} \\
\midrule
\multirow{5}{*}{LawLLM-7B}
 & Standard  & 72.12 & 67.34 & 69.12 & 72.34 & 70.23 & 70.89 & 65.56 & 67.45 & 71.66 & 68.89 & 69.34 & 64.89 & 66.23 & 69.34 & 67.45 & 70.12 & 65.45 & 67.12 & 69.79 & 68.12 & 70.62 & 65.81 & 67.48 & 70.78 & 68.67 \\
 & CoT  & 74.34 & 69.89 & 71.56 & 74.45 & 72.56 & 73.12 & 68.12 & 70.23 & 73.45 & 71.23 & 71.67 & 67.23 & 68.89 & 71.73 & 69.88 & 72.45 & 67.78 & 69.23 & 72.34 & 70.45 & 72.90 & 68.25 & 69.98 & 72.99 & 71.03 \\
 & RAG  & 79.12 & 74.89 & 76.56 & 79.23 & 77.45 & 78.56 & 73.89 & 75.89 & 79.22 & 76.89 & \cellcolor{secondblue}\textbf{77.23} & \cellcolor{secondblue}\textbf{72.56} & \cellcolor{secondblue}\textbf{74.34} & \cellcolor{secondblue}\textbf{77.23} & \cellcolor{secondblue}\textbf{75.34} & \cellcolor{secondblue}\textbf{79.34} & \cellcolor{secondblue}\textbf{74.89} & \cellcolor{secondblue}\textbf{76.45} & \cellcolor{secondblue}\textbf{79.56} & \cellcolor{secondblue}\textbf{77.56} & 78.56 & 74.06 & 75.81 & 78.81 & 76.81 \\
 & Iteration  & \cellcolor{secondblue}\textbf{80.89} & \cellcolor{secondblue}\textbf{76.45} & \cellcolor{secondblue}\textbf{78.23} & \cellcolor{secondblue}\textbf{80.91} & \cellcolor{secondblue}\textbf{79.12} & \cellcolor{secondblue}\textbf{80.12} & \cellcolor{secondblue}\textbf{75.56} & \cellcolor{secondblue}\textbf{77.34} & \cellcolor{secondblue}\textbf{81.22} & \cellcolor{secondblue}\textbf{78.56} & 76.12 & 71.45 & 72.78 & 76.57 & 74.23 & 77.89 & 73.45 & 75.12 & 78.02 & 76.12 & \cellcolor{secondblue}\textbf{78.76} & \cellcolor{secondblue}\textbf{74.23} & \cellcolor{secondblue}\textbf{75.87} & \cellcolor{secondblue}\textbf{79.18} & \cellcolor{secondblue}\textbf{77.01} \\
 & RCBSF  & \cellcolor{bestblue}\textbf{85.89} & \cellcolor{bestblue}\textbf{81.88} & \cellcolor{bestblue}\textbf{83.56} & \cellcolor{bestblue}\textbf{86.87} & \cellcolor{bestblue}\textbf{84.55} & \cellcolor{bestblue}\textbf{84.23} & \cellcolor{bestblue}\textbf{79.67} & \cellcolor{bestblue}\textbf{81.34} & \cellcolor{bestblue}\textbf{85.40} & \cellcolor{bestblue}\textbf{82.66} & \cellcolor{bestblue}\textbf{82.34} & \cellcolor{bestblue}\textbf{77.89} & \cellcolor{bestblue}\textbf{79.56} & \cellcolor{bestblue}\textbf{83.29} & \cellcolor{bestblue}\textbf{80.77} & \cellcolor{bestblue}\textbf{83.89} & \cellcolor{bestblue}\textbf{79.78} & \cellcolor{bestblue}\textbf{81.45} & \cellcolor{bestblue}\textbf{85.12} & \cellcolor{bestblue}\textbf{82.56} & \cellcolor{bestblue}\textbf{84.09} & \cellcolor{bestblue}\textbf{79.81} & \cellcolor{bestblue}\textbf{81.48} & \cellcolor{bestblue}\textbf{85.17} & \cellcolor{bestblue}\textbf{82.64} \\
\midrule
\multirow{5}{*}{LexiLaw-6B}
 & Standard  & 62.45 & 57.67 & 59.23 & 62.45 & 60.45 & 62.12 & 57.45 & 59.12 & 62.23 & 60.23 & 62.11 & 57.45 & 58.89 & 62.15 & 60.15 & 62.45 & 57.89 & 59.56 & 63.22 & 60.78 & 62.28 & 57.62 & 59.20 & 62.51 & 60.40 \\
 & CoT  & 64.78 & 59.89 & 61.56 & 65.33 & 62.89 & 64.23 & 59.23 & 61.12 & 63.86 & 62.11 & 63.45 & 58.78 & 60.12 & 63.45 & 61.45 & 64.23 & 59.89 & 61.45 & 64.67 & 62.56 & 64.17 & 59.45 & 61.06 & 64.33 & 62.25 \\
 & RAG  & \cellcolor{secondblue}\textbf{70.45} & \cellcolor{secondblue}\textbf{65.45} & \cellcolor{secondblue}\textbf{67.23} & \cellcolor{secondblue}\textbf{71.11} & \cellcolor{secondblue}\textbf{68.56} & \cellcolor{secondblue}\textbf{68.89} & \cellcolor{secondblue}\textbf{63.78} & \cellcolor{secondblue}\textbf{65.56} & \cellcolor{secondblue}\textbf{68.89} & \cellcolor{secondblue}\textbf{66.78} & 65.89 & 61.23 & 62.78 & 65.66 & 63.89 & \cellcolor{secondblue}\textbf{69.45} & \cellcolor{secondblue}\textbf{64.56} & \cellcolor{secondblue}\textbf{66.23} & \cellcolor{secondblue}\textbf{69.56} & \cellcolor{secondblue}\textbf{67.45} & \cellcolor{secondblue}\textbf{68.67} & \cellcolor{secondblue}\textbf{63.76} & \cellcolor{secondblue}\textbf{65.45} & \cellcolor{secondblue}\textbf{68.80} & \cellcolor{secondblue}\textbf{66.67} \\
 & Iteration  & 68.23 & 63.89 & 65.12 & 68.12 & 66.34 & 66.56 & 62.12 & 63.89 & 65.67 & 64.56 & \cellcolor{secondblue}\textbf{67.34} & \cellcolor{secondblue}\textbf{62.78} & \cellcolor{secondblue}\textbf{63.89} & \cellcolor{secondblue}\textbf{66.91} & \cellcolor{secondblue}\textbf{65.23} & 67.12 & 62.56 & 64.12 & 66.68 & 65.12 & 67.31 & 62.84 & 64.26 & 66.84 & 65.31 \\
 & RCBSF  & \cellcolor{bestblue}\textbf{76.89} & \cellcolor{bestblue}\textbf{71.67} & \cellcolor{bestblue}\textbf{73.45} & \cellcolor{bestblue}\textbf{77.51} & \cellcolor{bestblue}\textbf{74.88} & \cellcolor{bestblue}\textbf{75.23} & \cellcolor{bestblue}\textbf{69.89} & \cellcolor{bestblue}\textbf{71.67} & \cellcolor{bestblue}\textbf{76.13} & \cellcolor{bestblue}\textbf{73.23} & \cellcolor{bestblue}\textbf{71.89} & \cellcolor{bestblue}\textbf{67.23} & \cellcolor{bestblue}\textbf{68.89} & \cellcolor{bestblue}\textbf{71.51} & \cellcolor{bestblue}\textbf{69.88} & \cellcolor{bestblue}\textbf{73.23} & \cellcolor{bestblue}\textbf{68.45} & \cellcolor{bestblue}\textbf{70.12} & \cellcolor{bestblue}\textbf{73.08} & \cellcolor{bestblue}\textbf{71.22} & \cellcolor{bestblue}\textbf{74.31} & \cellcolor{bestblue}\textbf{69.31} & \cellcolor{bestblue}\textbf{71.03} & \cellcolor{bestblue}\textbf{74.56} & \cellcolor{bestblue}\textbf{72.30} \\
\midrule
\multirow{5}{*}{Qwen2.5-7B-Chat}
 & Standard  & 76.56 & 71.45 & 73.23 & 76.12 & 74.34 & 75.12 & 69.89 & 71.56 & 75.91 & 73.12 & 75.45 & 70.89 & 72.45 & 75.01 & 73.45 & 74.89 & 69.56 & 71.89 & 75.22 & 72.89 & 75.51 & 70.45 & 72.28 & 75.56 & 73.45 \\
 & CoT  & 79.12 & 74.34 & 75.89 & 78.21 & 76.89 & 77.45 & 72.34 & 74.12 & 77.89 & 75.45 & 77.12 & 72.56 & 73.89 & 76.91 & 75.12 & 77.34 & 72.12 & 74.34 & 77.12 & 75.23 & 77.76 & 72.84 & 74.56 & 77.53 & 75.67 \\
 & RAG  & 82.34 & 77.56 & 79.34 & 81.24 & 80.12 & \cellcolor{secondblue}\textbf{82.34} & \cellcolor{secondblue}\textbf{77.12} & \cellcolor{secondblue}\textbf{79.45} & \cellcolor{secondblue}\textbf{82.01} & \cellcolor{secondblue}\textbf{80.23} & 81.56 & 76.89 & 78.23 & 81.56 & 79.56 & \cellcolor{secondblue}\textbf{81.23} & \cellcolor{secondblue}\textbf{76.89} & \cellcolor{secondblue}\textbf{78.45} & \cellcolor{secondblue}\textbf{80.79} & \cellcolor{secondblue}\textbf{79.34} & \cellcolor{secondblue}\textbf{81.87} & 77.12 & \cellcolor{secondblue}\textbf{78.87} & \cellcolor{secondblue}\textbf{81.40} & \cellcolor{secondblue}\textbf{79.81} \\
 & Iteration  & \cellcolor{secondblue}\textbf{82.89} & \cellcolor{secondblue}\textbf{79.99} & \cellcolor{secondblue}\textbf{80.56} & \cellcolor{secondblue}\textbf{82.80} & \cellcolor{secondblue}\textbf{81.56} & 81.89 & 75.23 & 77.12 & 81.32 & 78.89 & \cellcolor{secondblue}\textbf{82.45} & \cellcolor{secondblue}\textbf{77.89} & \cellcolor{secondblue}\textbf{79.23} & \cellcolor{secondblue}\textbf{82.23} & \cellcolor{secondblue}\textbf{80.45} & 80.12 & 76.01 & 77.89 & 78.46 & 78.12 & 81.84 & \cellcolor{secondblue}\textbf{77.28} & 78.70 & 81.20 & 79.76 \\
 & RCBSF  & \cellcolor{bestblue}\textbf{83.01} & \cellcolor{bestblue}\textbf{85.99} & \cellcolor{bestblue}\textbf{90.43} & \cellcolor{bestblue}\textbf{91.57} & \cellcolor{bestblue}\textbf{87.75} & \cellcolor{bestblue}\textbf{83.41} & \cellcolor{bestblue}\textbf{85.46} & \cellcolor{bestblue}\textbf{89.21} & \cellcolor{bestblue}\textbf{90.05} & \cellcolor{bestblue}\textbf{87.03} & \cellcolor{bestblue}\textbf{82.98} & \cellcolor{bestblue}\textbf{85.68} & \cellcolor{bestblue}\textbf{87.01} & \cellcolor{bestblue}\textbf{91.89} & \cellcolor{bestblue}\textbf{86.89} & \cellcolor{bestblue}\textbf{83.15} & \cellcolor{bestblue}\textbf{85.52} & \cellcolor{bestblue}\textbf{83.31} & \cellcolor{bestblue}\textbf{91.33} & \cellcolor{bestblue}\textbf{85.83} & \cellcolor{bestblue}\textbf{83.14} & \cellcolor{bestblue}\textbf{85.66} & \cellcolor{bestblue}\textbf{87.49} & \cellcolor{bestblue}\textbf{91.21} & \cellcolor{bestblue}\textbf{86.87} \\
\bottomrule
\end{tabular}
}
\caption{Fine-grained Quality Metrics Breakdown. We report Clarity (\textbf{Cla}), Rigor (\textbf{Rig}), Balance (\textbf{Bal}), and Professionalism (\textbf{Pro}) for each dataset and their average (\textbf{ALL}). The \colorbox{bestblue}{\textbf{dark blue}} and \colorbox{secondblue}{\textbf{light blue}} cells indicate the best and second-best performance within each model group.}
\label{tab:full_quality_metrics}
\end{table*}

\begin{figure*}[htb]
    \centering
    \includegraphics[width=0.9\linewidth]{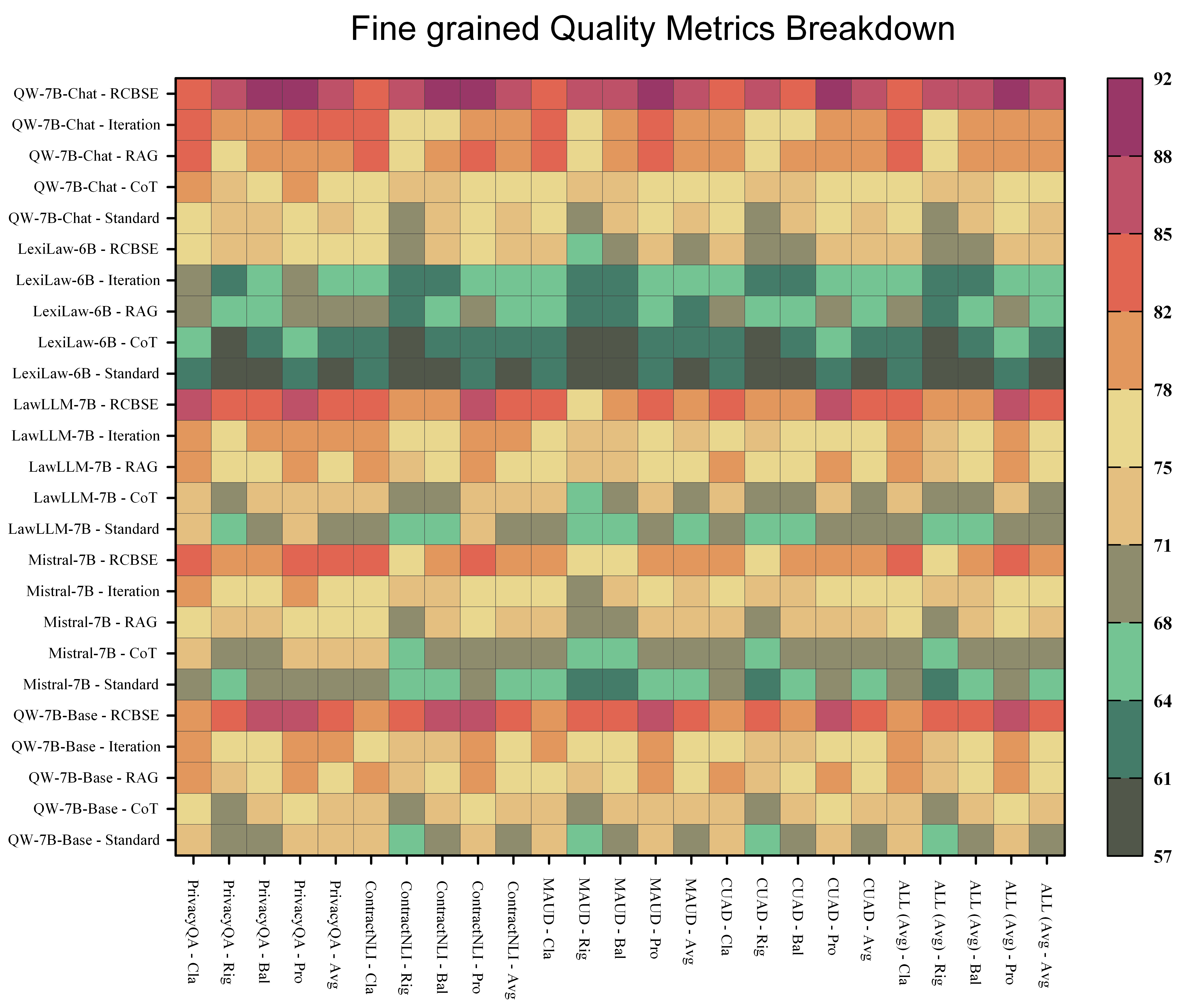}
    \caption{Heatmap of Fine-Grained Quality Metrics. Multi-model-group variants’ performance across multi-tasks. Dark magenta is the best in group, dark green is the worst in group. Metric values range from 57--92.}
    \label{fig:Heatmap of Fine-Grained}
\end{figure*}

In this section, we provide a granular analysis of the experimental results, focusing on the fine-grained quality metrics, and the component-wise ablation contributions.

\subsection{Fine-grained Quality Analysis}
\label{subsec:quality_breakdown}

While the main paper focuses on the Risk Resolution Rate (RRR), the linguistic and legal quality of the generated text is equally critical. We break down the Contract Quality (CQ) into four orthogonal dimensions: \textit{Clarity} ($\mathcal{C}$), \textit{Rigor} ($\mathcal{R}$), \textit{Balance} ($\mathcal{B}$), and \textit{Professionalism} ($\mathcal{P}$).

To formally evaluate the improvement, we define the \textit{Quality Gain Tensor} $\Delta \mathbf{Q}$ for a model $M$ under the \textsc{RCBSF} framework compared to the Baseline ($B$) as:

\begin{equation}
\Delta \mathbf{Q}_{M} = \frac{1}{|\mathcal{D}|} \sum_{x \in \mathcal{D}} \left( \Phi(x_{RCBSF}) - \Phi(x_{Base}) \right)
\end{equation}

where $\Phi: \mathcal{X} \to \mathbb{R}^4$ maps the contract text to the score vector $[\mathcal{C}, \mathcal{R}, \mathcal{B}, \mathcal{P}]^\top$.

\paragraph{Analysis of the Contract Quality results:}
Table~\ref{tab:full_quality_metrics} and Figure~\ref{fig:Heatmap of Fine-Grained} present the comprehensive quantitative results and their corresponding visual distributions. This dual analysis reveals three key insights:
\begin{enumerate}
    \item \textbf{Rigor is the primary beneficiary:} Across all backbones, the \textit{Rigor} metric sees the most significant improvement. For instance, with LawLLM-7B, the Rigor score improves from $65.81$ (Standard) to $79.81$ (\textsc{RCBSF}). This logical enhancement is visually striking in Figure~\ref{fig:Heatmap of Fine-Grained}, where the Rigor columns transition from dark green (worst) in baseline methods to deep magenta (best) in the \textsc{RCBSF} rows, verifying that the LVA's constraints effectively curb the loose logic often observed in unguided LLMs.
    \item \textbf{Stability across Backbones:} Even for weaker backbones like LexiLaw-6B, \textsc{RCBSF} boosts the average quality from $60.40$ to $72.30$. The heatmap corroborates this stability, displaying a consistent high-intensity color band for \textsc{RCBSF} across all model groups compared to the patchy performance of baselines. This suggests that the Stackelberg game mechanism is model-agnostic and can force even smaller models to adhere to strict legal standards.
    \item \textbf{Peak Performance:} The strongest configuration, QW-7B-Chat + RCBSF, achieves state-of-the-art results with an average CQ of $\mathbf{86.87}$. As illustrated by the darkest magenta blocks in Figure~\ref{fig:Heatmap of Fine-Grained}, this configuration significantly outperforms the RAG baseline ($79.81$) and approaches expert human levels ($>90$).
\end{enumerate}

\subsection{Ablation Study Breakdown}
\label{subsec:ablation_breakdown}

To understand the contribution of the 5-dimensional constraint structure, we conducted an ablation study by selectively masking the inputs to the CRA. We define two error modes:
\begin{enumerate}
    \item \textbf{Hallucination Rate (HR):} The CRA modifies a clause that was NOT risky.
    \item \textbf{Fix Failure Rate (FFR):} The CRA attempts to fix a risk but fails to resolve it substantively.
\end{enumerate}

\begin{table}[h]
\centering
\small
\begin{tabular}{l|cc|c}
\toprule
\textbf{Configuration} & \textbf{HR ($\downarrow$)} & \textbf{FFR ($\downarrow$)} & \textbf{RRR ($\uparrow$)} \\
\midrule
\textbf{Full \textsc{RCBSF}} & \textbf{4.23\%} & \textbf{7.35\%} & \textbf{84.21\%} \\
\midrule
w/o Evidence ($e_k$) & 18.68\% & 11.24\% & 76.25\% \\
w/o Suggestion ($s_k$) & 5.19\% & 19.86\% & 75.14\% \\
w/o Location ($l_k$) & 12.47\% & 9.54\% & 78.19\% \\
\bottomrule
\end{tabular}
\caption{Component-wise Ablation. Removing specific dimensions from the Leader's instruction leads to distinct failure modes.}
\label{tab:ablation_component}
\end{table}

\paragraph{Analysis.} 
As shown in Table~\ref{tab:ablation_component}, the results of the component-wise ablation study yield several key observations regarding the contribution of each instruction dimension:

\begin{itemize}
    \item \textbf{Effect of Evidence ($e_k$):} The removal of explicit evidence ($e_k$) leads to a marked spike in the \textit{Hallucination Rate} ($4.23\% \to 18.68\%$). Without the anchor text to ground the generation, the CRA frequently modifies unrelated clauses that share similar keywords, thereby compromising the semantic integrity of the contract.
    \item \textbf{Effect of Suggestion ($s_k$):} Excluding actionable suggestions ($s_k$) results in a drastic increase in the \textit{Fix Failure Rate} ($7.35\% \to 19.86\%$). While the model successfully identifies the risk (indicated by the Issue field), it tends to apply non-committal or generic mitigations (e.g., merely adding "to the extent reasonable") rather than enforcing necessary hard constraints such as liability caps.
    \item \textbf{Effect of Location ($l_k$):} Omitting location information ($l_k$) significantly degrades performance, specifically increasing the \textit{Hallucination Rate} ($4.23\% \to 12.47\%$). This indicates that precise localization cues act as essential boundary constraints; without them, the model struggles to pinpoint the correct target scope, often leading to erroneous modifications in non-target sections or the fabrication of contextually irrelevant clauses.
\end{itemize}


\section{Qualitative Case Studies}
\label{sec:appendix_cases}

To provide an intuitive understanding of the \textsc{RCBSF} framework, we present qualitative examples tracing the full lifecycle of a contract revision. We illustrate how the \textit{Global Prescriptive Agent (Leader)} enforces constraints and how the \textit{Follower System} (CRA + Inner LVA) iteratively refines the text. Finally, we analyze typical failure modes to discuss current limitations.

\subsection{Success Cases (End-to-End Flow)}
\label{subsec:success_cases}

In Figure~\ref{fig:case_study_1} and Figure~\ref{fig:case_study_2}, we display two distinct negotiation scenarios. 

\paragraph{Case 1: The Unlimited Liability Trap.} 
This case (Figure~\ref{fig:case_study_1}) demonstrates the correction of a high-risk liability clause.
\begin{enumerate}
    \item \textbf{Input:} The original text contains a "blanket indemnity" clause exposing the provider to unlimited risk.
    \item \textbf{Leader Analysis:} The LVA identifies this as a \textbf{Category A} risk (Q2=A). It outputs a structured tuple specifying the \textit{Evidence} ("indemnify... from any and all claims") and a concrete \textit{Suggestion} ("Insert a liability cap").
    \item \textbf{Revision Process:} 
    \begin{itemize}
        \item \textit{Revision \#1:} The CRA softens the language but forgets the specific cap.
        \item \textit{Inner Audit:} The LVA flags that the "financial cap is still missing."
        \item \textit{Final Revision:} The CRA incorporates the "12 months' fees" cap, satisfying the Leader's constraint.
    \end{itemize}
\end{enumerate}

\begin{figure*}[htb]
    \centering
    \begin{tcolorbox}[colback=white, colframe=blue!40!black, title=\textbf{Case Study 1: Mitigating Unlimited Liability Risks}]
    \small
    \textbf{1. Original Contract Segment (Input):}
    \texttt{"The Provider shall indemnify and hold the Client harmless from and against \colorbox{red!20}{any and all claims, losses, damages}, liabilities, and expenses arising out of or in connection with the Services, regardless of cause."}
    
    \vspace{0.2cm}
    \textbf{2. Leader Agent Analysis (Strategic Instruction):}
    \begin{itemize}
        \item \textbf{Category:} \textsc{Excessive Liability}
        \item \textbf{Location:} Section 9.1 (Indemnification)
        \item \textbf{Evidence:} \textit{"...indemnify... from any and all claims... regardless of cause."}
        \item \textbf{Issue:} Creates unlimited exposure for the Provider; lacks a monetary cap or exclusion for indirect damages.
        \item \textbf{Suggestion:} \colorbox{yellow!30}{\textbf{Limit liability to the fees paid in the preceding 12 months} and exclude consequential damages.}
        \item \textbf{Q-Score:} \{Q1: B, \textbf{Q2: A (High)}, Q3: B, Q4: A\}
    \end{itemize}

    \vspace{0.2cm}
    \textbf{3. Follower System Execution (Iterative):}
    
    \textit{[Revision \#1 - Attempt]:}
    \texttt{"The Provider shall indemnify the Client for direct losses arising from the Provider's negligence..."} 
    \hfill \textit{\textcolor{red}{$\to$ Rejected by Inner Audit (Missing Cap).}}

    \textit{[Final Revision - Convergence]:}
    \texttt{"The Provider shall indemnify the Client for direct losses... provided that \colorbox{green!20}{the Provider's total aggregate liability shall not exceed the total fees paid} by the Client in the twelve (12) months preceding the claim. In no event shall Provider be liable for indirect or consequential damages."}
    \end{tcolorbox}
    \caption{A complete revision flow for a Liability Clause. The Leader's specific suggestion (highlighted in yellow) serves as a hard constraint that the Follower eventually satisfies in the final output (green).}
    \label{fig:case_study_1}
\end{figure*}

\paragraph{Case 2:  Multi-Objective Risk Resolution in Commercial Contracts.} 
\label{subsec:case_study_multi}

To illustrate the capability of RCBSF in handling complex commercial agreements, Figure~\ref{fig:case_study_2} visualizes the revision trajectory of a \textit{Software Development Services Agreement}. The input text contains three distinct high-stakes risks: (1) \textit{Unilateral Termination}, (2) \textit{Vendor IP Retention} (replacing standard liability issues with a critical asset ownership risk), and (3) \textit{Ambiguous Payment Terms}.

The revision process adheres to the hierarchical Stackelberg dynamics defined in our implementation:
\begin{enumerate}
    \item \textbf{Global Prescription (Outer Loop):} The \textbf{Global Prescriptive Agent (GPA)} decomposes the contract text, extracting a 5-dimensional constraint vector for each of the three identified risks.
    \item \textbf{Execution \& Audit (Inner Loop):} The Follower system (CRA \& LVA) engages in a 3-round iterative revision process. 
    \item \textbf{Trajectory Analysis:}
    \begin{itemize}
        \item \textbf{Iteration \#1:} The CRA successfully resolves the Payment and Termination risks but fails to address the IP ownership transfer, merely polishing the language.
        \item \textbf{Fusion Feedback:} The LVA triggers a negative reward signal regarding the IP clause. The Fusion module generates a specific hint: \textit{"Priority: Reassign Deliverable ownership to Client."}
        \item \textbf{Iteration \#3:} The CRA integrates all constraints, producing a finalized clause that balances fairness, clarity, and asset protection.
    \end{itemize}
\end{enumerate}

\begin{figure*}[htb]
    \centering
    \begin{tcolorbox}[colback=white, colframe=blue!40!black, title=\textbf{Case Study 2: Simultaneous Resolution of Three Commercial Risks}]
    \small
    
    \textbf{\large Phase 1: Input Contract (Raw Text)}
    \begin{quote}
    "The Company may terminate this Agreement \colorbox{red!15}{at any time, effective immediately}. Regarding intellectual property, \colorbox{red!15}{Vendor shall retain all right, title, and interest} in and to any software or deliverables created hereunder. Client shall \colorbox{red!15}{pay all invoices submitted by Vendor}."
    \end{quote}
    
    \vspace{0.15cm}
    \rule{\linewidth}{0.5pt}
    \vspace{0.15cm}

    \textbf{\large Phase 2: Global Prescriptive Agent (GPA) Analysis}
    \textit{The GPA generates a 5-dimensional risk budget for each detected issue:}

    \begin{itemize}
        \item \textbf{[Risk 1] Unilateral Termination (Fairness)}
        \begin{itemize}
            \item \textit{Evidence:} "...terminate... at any time, effective immediately."
            \item \textit{Issue:} Immediate termination disrupts project continuity and lacks procedural fairness.
            \item \textit{Suggestion:} \colorbox{yellow!25}{Require 30-day prior written notice for convenience termination.}
        \end{itemize}
        \item \textbf{[Risk 2] IP Retention (Asset Ownership)}
        \begin{itemize}
            \item \textit{Evidence:} "Vendor shall retain all right, title, and interest..."
            \item \textit{Issue:} Client pays for development but fails to own the work product (Work Made for Hire).
            \item \textit{Suggestion:} \colorbox{yellow!25}{Transfer ownership of Deliverables to Client upon payment.}
        \end{itemize}
        \item \textbf{[Risk 3] Ambiguous Payment (Financial)}
        \begin{itemize}
            \item \textit{Evidence:} "...pay all invoices submitted..."
            \item \textit{Issue:} Lacks specific payment terms (e.g., Net 30), creating cash flow uncertainty.
            \item \textit{Suggestion:} \colorbox{yellow!25}{Specify payment due date (e.g., within 30 days of receipt).}
        \end{itemize}
    \end{itemize}

    \vspace{0.15cm}
    \rule{\linewidth}{0.5pt}
    \vspace{0.15cm}

    \textbf{\large Phase 3: Follower System Execution (Inner Loop Dynamics)}
    
    \textbf{$\triangleright$ Iteration \#1 (Initial Revision by CRA):}
    \textit{The agent fixes Termination and Payment but misses the IP nuance.}
    \begin{quote}
    "The Company may terminate this Agreement upon \colorbox{green!15}{30 days' prior written notice}. Vendor \colorbox{orange!15}{retains ownership of the software} but grants Client a license to use it. Client shall pay invoices \colorbox{green!15}{within thirty (30) days of receipt}."
    \end{quote}
    
    \textbf{$\rightarrow$ LVA Audit / Fusion Hint:} 
    \textit{"Risk 1 (Termination) \& Risk 3 (Payment) Resolved ($Q=A$). \textbf{Critical Failure on Risk 2 (IP):} 'License to use' is insufficient for custom software. \textbf{Instruction:} Must assign full title to Client."}
    
    \vspace{0.15cm}
    
    \textbf{$\triangleright$ Iteration \#3 (Final Converged Output):}
    \textit{The CRA applies the 'Work Made for Hire' logic mandated by the Fusion Hint.}
    \begin{quote}
    "The Company may terminate this Agreement without cause upon providing \colorbox{green!15}{at least thirty (30) days' prior written notice}. Vendor agrees that all Deliverables shall be considered \colorbox{green!15}{'Work Made for Hire' and hereby assigns all right, title, and interest} to Client. Client shall pay all undisputed invoices \colorbox{green!15}{within thirty (30) days} of receipt."
    \end{quote}
    
    \end{tcolorbox}
    \caption{Detailed visualization of the multi-risk resolution in a commercial software context. The \textbf{GPA} successfully identifies three disparate risks (Fairness, IP, Financial). The \textbf{Inner Loop} demonstrates the system's self-correction capability: while Iteration \#1 fails to fully transfer IP rights, the LVA's audit feedback forces the CRA to execute a legally binding assignment in Iteration \#3, satisfying all Leader constraints simultaneously.}
    \label{fig:case_study_2}
\end{figure*}

\subsection{Failure Case Analysis}
\label{subsec:failure_cases}

Despite the strong performance, \textsc{RCBSF} exhibits specific failure modes, primarily driven by context limitations and jurisdictional ambiguity.

\paragraph{Failure Mode A: The "Lost-in-the-Middle" Hallucination.}
In extremely long documents ($>15,000$ tokens), the Leader Agent occasionally "hallucinates" risks by misattributing Evidence from one section to another.

\begin{itemize}
    \item \textbf{Scenario:} A Merger Agreement where "Definition of Material Adverse Effect" (Page 5) excludes pandemics, but "Closing Conditions" (Page 80) references it.
    \item \textbf{Model Error:} The Leader claims \textit{"Missing Pandemic Exclusion"} because it fails to retrieve the definition from Page 5 due to attention decay over long contexts.
    \item \textbf{Result:} The CRA adds a redundant exclusion clause, lowering the \textit{Conciseness} score, although the legal risk is technically resolved.
\end{itemize}

\paragraph{Failure Mode B: Jurisdictional Overfitting.}
The model is predominantly trained on US-centric English corpora (CUAD, MAUD), leading to bias when handling Civil Law contracts.

\begin{itemize}
    \item \textbf{Input:} A contract governed by German Law (\textit{BGB}).
    \item \textbf{Leader Suggestion:} \textit{"Add a clause explicitly stating that 'consideration' has been exchanged to ensure validity."}
    \item \textbf{Analysis:} Under German law, "consideration" is not a requirement for contract validity (unlike in US/UK Common Law).
    \item \textbf{Outcome:} The system inserts a legally unnecessary "Consideration Clause," revealing a lack of jurisdiction-specific legal reasoning. This highlights the need for a "Jurisdiction Awareness" parameter in future iterations of the $h$ vector.
\end{itemize}

\section{Prompt Engineering}
\label{sec:appendix_prompt}

To ensure reproducibility and transparency, we provide the exact system prompts used for the Global Prescriptive Agent (Leader), the Q-Score Assessment mechanism, and the Constrained Revision Agent (Follower). The design of these prompts is central to the \textsc{RCBSF} framework's ability to balance risk mitigation with semantic preservation.


\subsection{Global Prescriptive Agent (Leader)}
\label{subsec:prompt_leader}

The Leader Agent is responsible for the initial risk auditing. Unlike standard find and fix prompts, we enforce a strict 5-Dimensional Output Structure (see Figure~\ref{fig:prompt_leader}) to ground the model's reasoning:
\begin{itemize}
    \item \textbf{Category:} Provides a high-level taxonomy for clustering risks.
    \item \textbf{Location:} Forces the model to perform "retrieval" within the context, mitigating hallucination by requiring pointer references.
    \item \textbf{Evidence:} Serves as a fact-checking mechanism. The model must quote the exact text causing the risk.
    \item \textbf{Issue:} Encodes the "Chain-of-Thought" reasoning, explaining \textit{why} the evidence constitutes a risk.
    \item \textbf{Suggestion:} Provides the actionable instruction for the Follower, decoupling the "what to do" from the "how to do it."
\end{itemize}

\begin{figure*}[htb] 
    \centering
    \begin{tcolorbox}[colback=white, colframe=gray!50!black, arc=0mm, title=\textbf{Prompt for Global Prescriptive Agent (Risk Extraction)}]
\small
\textbf{System Instruction:}
You are a strictly logical contract risk auditor. Please review the contract text below and extract risk points.
You must analyze from 5 dimensions for each risk:
\begin{enumerate}
    \item \textbf{category}: Specific risk classification (At least ten words or more, describe the risk category as detailed as possible).
    \item \textbf{location}: Where this risk appears (e.g., "Section 1.2").
    \item \textbf{evidence}: Original text quote supporting this risk. If missing, state "Missing clause".
    \item \textbf{issue}: Specific description of what is wrong (the defect, ambiguity, or unfairness).
    \item \textbf{suggestion}: Actionable advice on how to modify/add clause text.
\end{enumerate}

\textbf{Output Requirements:}
\begin{itemize}
    \item Output must be a JSON object containing a list "risk\_categories".
    \item If the text is very short, try to identify at least 8-15 potential risks/missing elements.
    \item Even for short texts or definitions, analyze strictly (e.g., Are definitions vague? Is the scope clear?).
\end{itemize}

\textbf{Input Context:}
Contract text:
\texttt{<<<CONTRACT>>>}
\{CONTRACT\_TEXT\}
\texttt{<<<END>>>}

Please only output JSON.
    \end{tcolorbox}
    \caption{The structured prompt used by the Leader Agent to extract the 5-dimensional risk tuple. This structure forces the model to ground its suggestions in specific evidence.}
    \label{fig:prompt_leader}
\end{figure*}

\begin{figure*}[htb]
    \centering
    \begin{tcolorbox}[
        enhanced,
        colback=white,
        colframe=teal!50!black, 
        arc=2mm,
        title=\textbf{\large Prompt for Local Verification Agent (Adversarial Auditor)}
    ]
    \small
    \textbf{\textcolor{teal!50!black}{[System Role Definition]}}
    You are the \textbf{Inner Auditor} (LVA). Your function is to calculate the \textit{Residual Risk} $\mathcal{R}(c_i, a_i)$ of the generated text. You must act logically and critically. Do not be lenient.

    \vspace{0.2cm}
    \textbf{\textcolor{teal!50!black}{[Audit Logic]}}
    Compare the \texttt{Revised\_Contract} against the \texttt{Risk\_Definition\_List}. 
    For every clause, determine if the risk is: \texttt{RESOLVED}, \texttt{PARTIALLY\_RESOLVED}, or \texttt{UNRESOLVED}.

    \rule{\linewidth}{0.5pt}

    \textbf{\textcolor{teal!50!black}{[Structured Output Requirement]}}
    Return a strictly formatted JSON object. This will be used to calculate the Loss Function.
    
    \begin{verbatim}
{
  "audit_report": [
    {
      "risk_id": "R_01 (Liability)",
      "status": "UNRESOLVED",
      "severity_weight": 0.9,
      "location_quote": "...liability shall not exceed...",
      "issue_description": "Cap is still ambiguous.",
      "gradient_feedback": "Specify exact dollar amount or fee multiplier."
    },
    ...
  ],
  "global_safety_score": 0.45
}
    \end{verbatim}
    \end{tcolorbox}
    \caption{The LVA Prompt. By enforcing a structured JSON output containing severity weights and gradient feedback, the Auditor transforms qualitative legal analysis into quantitative signals for the next optimization round.}
    \label{fig:prompt_lva}
\end{figure*}

\subsection{Q-Score Assessment Mechanism}
\label{subsec:prompt_qscore}

To quantify the severity of each identified risk, we employ a dedicated quantization prompt. As shown in Figure~\ref{fig:prompt_qscore}, the model evaluates each risk across four specific orthogonal dimensions ($Q1 \dots Q4$). This quantization allows the framework to construct the weighted risk instruction vector $h$.

\begin{itemize}
    \item \textbf{Q1 (Validity/Compliance):} Assesses the risk of contract invalidation or regulatory penalties.
    \item \textbf{Q2 (Liability Scope):} Measures the financial exposure (e.g., unlimited liability vs. capped).
    \item \textbf{Q3 (Control Allocation):} Evaluates the balance of power (e.g., unilateral termination rights).
    \item \textbf{Q4 (Remediability):} Estimates the cost and difficulty of fixing the issue if the risk materializes.
\end{itemize}

\subsection{Constrained Revision Agent (CRA)}
\label{subsec:prompt_CRA}

The Constrained Revision Agent (CRA) functions as the strategic follower and execution engine within the lower level of the Risk-Constrained Bilevel Stackelberg Framework (RCBSF). Unlike standard generative models that rely solely on semantic probability, the CRA operates as a rational optimizer in a non-cooperative game setting. Its primary objective is to generate a contract variant $x'$ that minimizes the composite loss function defined in Eq.~\ref{eq:loss_func}:
\begin{equation}
    \min_{x'} \mathbb{E} \left[ \mathcal{R}(x', a_{i}) + \lambda_1 \mathcal{C}(x', x) - \lambda_2 \mathcal{S}(x') \right]
    \label{eq:loss_func}
\end{equation}
where $\mathcal{R}$ denotes the residual risk detected by the auditor, $\mathcal{C}$ represents the consumption cost (edit distance from the original contract), and $\mathcal{S}$ is the semantic quality reward.

As illustrated in Figure~\ref{fig:prompt_CRA}, the CRA receives a structured composite instruction designed to guide this optimization:
\begin{itemize}
    \item \textbf{Global Constraints (\texttt{OUTER\_HINT}):} Derived from the Leader Agent (GPA), this provides the high-level risk budget and mandatory clauses (e.g., governing law, liability caps).
    \item \textbf{Local Gradients (\texttt{INNER\_HINT}):} Feedback from the Local Verification Agent (LVA) acting as an adversarial critic, pointing out specific residual risks in the previous iteration (Generation $N$).
    \item \textbf{Budget Control (\texttt{FUSION\_HINT}):} Instructions to manage the trade-off between risk mitigation and text preservation to prevent excessive modification.
\end{itemize}

\paragraph{Handling Local Optima.} A critical challenge in iterative LLM generation is the lazy revision phenomenon, where the model converges to a local optimum by returning the original text to avoid introducing new errors. To counteract this, we implement a Dynamic Force Rewrite Mechanism. As shown in Figure~\ref{fig:prompt_force}, when the edit distance between iterations is zero ($EditDistance(x_{t}, x_{t-1}) == 0$), the system injects a high-priority adversarial prompt to force a substantive modification, pushing the agent out of the saddle point.

\subsection{Local Verification Agent (LVA)}
\label{subsec:prompt_lva}

The Local Verification Agent (LVA) serves as the inner-loop auditor within the Stackelberg game. It is responsible for computing the residual risk $\mathcal{R}(\mathfrak{c}_{j},a_{i})$ at each iteration step. Unlike the global leader which sets the direction, the LVA performs a fine-grained, clause-level inspection of the \textit{revised} contract (Gen-$n$ Contract).

This agent operates in a Multi-Round Revision-Audit Interaction mechanism. In each epoch, the LVA analyzes the output of the CRA and generates a structured Gen-$n$ Audit Report. This report maps detected issues to specific risk categories and weights, providing the necessary gradient signal for the CRA to adjust its next generation. This adversarial dynamic ensures that hidden risks are exposed and resolved dynamically, addressing the Incomplete Information and Hidden Risks issues prevalent in baseline methods.

The LVA's prompt (see Figure~\ref{fig:prompt_lva}) strictly enforces a structured JSON output to facilitate automated parsing and weight mapping  in the next optimization cycle.

\begin{figure*}[htb]
    \centering
    \begin{tcolorbox}[
        enhanced,
        colback=white, 
        colframe=blue!90!black, 
        arc=2mm, 
        boxrule=1.5pt,
        drop shadow,
        title=\textbf{\large Prompt Construction for Constrained Revision Agent (CRA)}
    ]
    \small
    \textbf{\textcolor{blue!90!black}{[System Role Definition]}}
    You are the \textbf{Strategic Follower} (Optimizer) in a Stackelberg game for legal contract revision. Your goal is not just to write text, but to solve a bilevel optimization problem:
    \[ \min \text{Total Loss} = \text{Residual Risk}(\mathcal{R}) + \text{Modification Cost}(\mathcal{C}) - \text{Quality Reward}(\mathcal{S}) \]
    You must strictly adhere to the constraints passed by the Leader (GPA) and the Auditor (LVA).

    \rule{\linewidth}{0.5pt} 

    \textbf{\textcolor{blue!90!black}{[Input 1: Global Strategic Constraints (from Leader)]}}
    \textit{Source: Outer\_Hint}
    \begin{itemize}
        \item \textbf{Risk Budget}: Strict. No high-risk clauses allowed.
        \item \textbf{Mandatory Requirements}: Governing Law must be "Delaware"; Liability Cap must be limited to "12 months fees".
        \item \textbf{Key Entity}: "Alpha Product" licensing.
    \end{itemize}

    \textbf{\textcolor{blue!90!black}{[Input 2: Optimization Constraints (Budget)]}}
    \textit{Source: Fusion\_Hint}
    \begin{itemize}
        \item \textbf{Edit Distance Limit}: $\le 15\%$ deviation from original structure.
        \item \textbf{Style}: Formal legal English, maintaining original definitions.
    \end{itemize}

    \textbf{\textcolor{blue!90!black}{[Input 3: Gradient Feedback (from Auditor)]}}
    \textit{Source: Inner\_Hint (Previous Iteration N)}
    \texttt{> Critical Risk Detected in Section 4.2: "Indemnification clause is missing IP infringement coverage."}
    \texttt{> Suggestion: Insert standard IP indemnification language.}

    \rule{\linewidth}{0.5pt}

    \textbf{\textcolor{red!60!black}{[Execution Task]}}
    Output the full revised contract. Do not explain. Do not summarize.
    \end{tcolorbox}
    \caption{The composite prompt mechanism for the CRA. It translates the abstract Stackelberg theoretical constraints into executable instructions, combining global strategy (Leader), resource budget (Fusion), and specific risk gradients (Auditor).}
    \label{fig:prompt_CRA}
\end{figure*}


\begin{figure*}[htb]
    \centering
    \begin{tcolorbox}[
        enhanced,
        colback=red!3!white, 
        colframe=red!80!black, 
        arc=1.5mm, 
        boxrule=2.5pt, 
        title={\Large \textbf{$\triangle$ Dynamic Intervention: Anti-Lazy Mechanism}}, 
        fonttitle=\bfseries
    ]
    \small
    \textbf{Trigger Condition:} $\text{EditDistance}(x_{t}, x_{t-1}) < \epsilon$ \par
    \textit{(Indicator: Model has converged prematurely with no effective revisions)}

    \vspace{0.3cm}
    \textbf{\textcolor{red!90!black}{[Adversarial Injection Prompt]}}
    \begin{ttfamily}
    "SYSTEM ALERT: DETECTED SADDLE POINT IN OPTIMIZATION."
    \newline
    The previous output is identical to the input — you are NOT optimizing the objective function.
    \end{ttfamily}

    \vspace{0.2cm}
    \textbf{MANDATORY OVERRIDE (Non-Negotiable):}
    \begin{enumerate}
        \item You MUST rewrite the content in Section \texttt{[Detected\_Risk\_Location]}.
        \item Mere wording adjustments are insufficient: you MUST alter the logical framework to mitigate \texttt{[Risk\_Category]}.
        \item Penalty consequence for non-compliance: $\text{Penalty Score} = \infty$ (immediate optimization failure).
    \end{enumerate}
    \end{tcolorbox}
    \caption{The Force Rewrite injection mechanism. This acts as a perturbation vector to push the CRA out of local optima (i.e., lazy revisions) — triggered when the system detects zero effective edits despite unresolved risks.}
    \label{fig:prompt_force}
\end{figure*}

\begin{figure*}[htb] 
    \centering
    \begin{tcolorbox}[colback=white, colframe=gray!50!black, arc=0mm, title=\textbf{Prompt for Q-Score Assessment (Abbreviated)}]
\small
\textbf{System Instruction:}
Below is a contract clause (or a risk point). Based solely on this content, please choose an option from A/B/C for each of the 4 dimensions.

\textbf{Question 1: Nature of Legal Consequences (Validity / Compliance)}
\begin{itemize}
    \item \textbf{A (High):} Likely to invalidate the contract, trigger criminal liability, or result in license revocation.
    \item \textbf{B (Medium):} Clause partially invalid or subject to administrative fines/rectification.
    \item \textbf{C (Low):} Disputes over interpretation; does not affect validity or involve penalties.
\end{itemize}

\textbf{Question 2: Scope and Limit of Liability (Exposure)}
\begin{itemize}
    \item \textbf{A (High):} No liability cap; covers indirect losses; "bear all consequences."
    \item \textbf{B (Medium):} Cap exists but is high (>100\% price); includes some third-party claims.
    \item \textbf{C (Low):} Clear cap ($\le$100\% price); direct losses only; excludes consequential damages.
\end{itemize}

\textbf{Question 3: Allocation of Rights, Obligations, and Control}
\begin{itemize}
    \item \textbf{A (High):} Counterparty has unilateral authority (termination, pricing) with no veto/remedy.
    \item \textbf{B (Medium):} Unilateral rights exist but require notice, cure period, or objective conditions.
    \item \textbf{C (Low):} Rights are balanced; major matters require joint consent.
\end{itemize}

\textbf{Question 4: Remediability and Long-Term Impact}
\begin{itemize}
    \item \textbf{A (High):} Difficult to restore state; permanent loss of IP or trade secrets.
    \item \textbf{B (Medium):} Remediable via substantial renegotiation or multi-layer approvals.
    \item \textbf{C (Low):} Remediable via simple supplementary agreement; short-term impact.
\end{itemize}

\textbf{Output Format:}
Output \textbf{only one JSON object}: \texttt{\{"Q1": "A", "Q2": "B", "Q3": "C", "Q4": "C"\}}
    \end{tcolorbox}
    \caption{The definition of the Q-Score metrics. Each dimension is graded on a 3-point scale to compute the final risk weight.}
    \label{fig:prompt_qscore}
\end{figure*}

\end{document}